\documentclass[sigconf]{aamas} 

\usepackage{soul}
\usepackage{url}
\usepackage{graphicx}

\usepackage{balance}


\usepackage{soul}
\usepackage{url}
\hypersetup{hidelinks}
\usepackage{graphicx}
\usepackage{amsmath}
\usepackage{booktabs}
\urlstyle{same}
\usepackage{setspace}



\PassOptionsToPackage{dvipsnames,table,prologue}{xcolor}

\usepackage[utf8]{inputenc} 
\usepackage[T1]{fontenc}    
\usepackage{hyperref}       
\usepackage{url}            
\usepackage{booktabs}       
\usepackage{amsfonts}       
\usepackage{amsmath}
\usepackage{mathtools}
\usepackage{bbm}
\usepackage{amsthm}
\usepackage{nicefrac}       
\usepackage{microtype}      
\usepackage{algorithm}
\usepackage[noend]{algorithmic}
\usepackage[font+=smaller,skip=0pt]{subcaption}
\usepackage{color}
\usepackage{wrapfig}
\usepackage{paralist}
\usepackage{epigraph}

\setlength{\epigraphwidth}{0.4\textwidth}



\allowdisplaybreaks
\pagestyle{fancy}
\fancyhead{}

\DeclareMathOperator*{\argmax}{\arg\!\max}

\newtheorem{theorem}{Theorem}
\newtheorem{lemma}{Lemma}
\newenvironment{hproof}{%
  \proof}{\endproof}
\newcommand{\NaiveGroupFair}{\textsc{NaiveGroupFair}}
\newcommand{\defeq}{\vcentcolon=}

\setcopyright{ifaamas}
\acmConference[AAMAS '22]{Proc.\@ of the 21st International Conference
on Autonomous Agents and Multiagent Systems (AAMAS 2022)}{May 9--13, 2022}
{Online}{P.~Faliszewski, V.~Mascardi, C.~Pelachaud,
M.E.~Taylor (eds.)}
\copyrightyear{2022}
\acmYear{2022}
\acmDOI{}
\acmPrice{}
\acmISBN{}

\title[Group Fairness in Bandits with Biased Feedback]{Group Fairness in Bandits with Biased Feedback}

%

\author{Candice Schumann}
\affiliation{
  \institution{University of Maryland}
  \city{College Park}
  \country{USA}%
}
\email{candiceschumann@gmail.com}

\author{Zhi Lang}
\affiliation{
  \institution{University of Maryland}
  \city{College Park}
  \country{USA}%
}
\email{willylang1996@gmail.com}

\author{Nicholas Mattei}
\affiliation{
  \institution{Tulane University}
  \city{New Orleans}
  \country{USA}%
}
\email{nsmattei@gmail.com}

\author{John P.~Dickerson}
\affiliation{
  \institution{University of Maryland}
  \city{College Park}
  \country{USA}%
}
\email{johnd@umd.edu}

\begin{abstract}
  We propose a novel formulation of group fairness with biased feedback in the contextual multi-armed bandit (CMAB) setting.  In the CMAB setting, a sequential decision maker must, at each time step, choose an arm to pull from a finite set of arms after observing some context for each of the potential arm pulls.  In our model, arms are partitioned into two or more sensitive groups based on some  protected feature(s) (e.g., age, race, or socio-economic status). Initial rewards received from pulling an arm may be distorted due to some unknown societal or measurement bias. We assume that in reality these groups are equal despite the biased  feedback received by the agent. To alleviate this, we learn a societal bias term which can be used to both find the source of bias and to potentially fix the problem outside of the algorithm. We provide a novel algorithm that can accommodate this notion of fairness for an arbitrary number of groups, and provide a theoretical bound on the regret for our algorithm. We validate our algorithm using synthetic data and two real-world datasets for intervention settings wherein we want to allocate resources fairly across groups.

\end{abstract}

\keywords{Group fairness; fair bandits; contextual bandits; human collaboration}

\begin{document}
\maketitle

\epigraph{{\it Knowing that one may be subject to bias is one thing; being able to correct it is another.}}{{Jon Elster}}

\section{Introduction}
In many online settings, a computational or human agent must sequentially select an item from a slate, receive feedback on that selection, and then use that feedback to learn how to select the best items in the following rounds.  Within computer science, economics, and operations research circles, this is typically modeled as a \emph{multi-armed bandit (MAB)} problem \cite{SuBa17a}.  Examples include algorithms for selecting what advertisements to display to users on a webpage~\cite{Mary:15:BanditRecommender}, systems for dynamic pricing~\cite{Misra19:Dynamic}, and content recommendation services~\cite{Li:10:PersonalRec}.  Indeed, such decision-making systems continue to expand in scope, making ever more important decisions in our lives such as setting bail \cite{Corbett18MeasureFairness}, making hiring decisions~\cite{Bogen19:All,Schumann:2019:CohortSelection}, and policing~\cite{Rudin13:Predictive}.  Thus, the study of the properties of these algorithms is of paramount importance as highlighted by \citet{Chouldechova:18:FrontiersFairness} motivating priorities for fairness research in machine learning.

In the basic MAB setting, there are $n$ \emph{arms}, each associated with a fixed but unknown reward probability distribution \cite{Lai:85:EfficientAllocation,Auer:02:UCB}. At each time step $t \in T$, an agent pulls an arm and receives a reward that is independent of any previous action and follows the selected arm's probability distribution.  
The goal of the agent is to maximize the total collected reward over time. A generalization of MAB is the contextual multi-armed bandit (CMAB) where the agent observes a $d$-dimensional \emph{context} along with the observed rewards to choose a new arm. In the CMAB problem, the agent learns the relationship between contexts and rewards and selects the best arm~\cite{Agrawal:13:ThompsonSampling}.

Yet, the use of MAB- and CMAB-based systems often results in behavior that is societally repugnant. \citet{Sweeney13:Discrimination} noted that queries for public records on Google resulted in different contextual advertisements based on whether the query target had a traditionally African American or Caucasian name; in the former case advertisements were more likely to contain text relating to criminal incidents.  Following that initial report similar instances continue to be observed, both in the bandit setting and in the general machine learning world~\cite{ONeil16:Weapons}.  In lockstep, the academic community has begun developing approaches to tackling issues of (un)fairness in learning settings. We have an opportunity to identify and understand why the data we have may be \emph{causing} the bias. 


A Computing Community Consortium (CCC) report on fairness in ML identified that most studies of fairness are focused on classification problems~\cite{Chouldechova:18:FrontiersFairness}.  These works define a statistical notion of fairness, typically a notion of equal treatment of equals \cite{Rawls71a}, and propose algorithms to abide by these constraints.  Two issues identified by \citet{Chouldechova:18:FrontiersFairness} that we address in this paper are extensions to notions of \emph{group fairness} and looking at fairness in \emph{online dynamic systems}, e.g., CMABs. We address these gaps by formalizing and providing an algorithm for fairness with biased feedback when the arms of the bandit can be partitioned into groups. Direct applications of our work including scenarios discussed within the AAMAS community like aiding the allocation of human resources in talent sourcing~\cite{Schumann20:Fairness}.

The recent AI100 study \cite{ai1002021}, whose goal is to take a broad and long term look at the opportunities and pitfalls for AI researchers, has highlighted the need to develop systems that work \emph{with} humans, providing oversight, transparency, and explanation. Our bandit formulation is one step towards creating more human-centered AI \cite{shneiderman2020human}, a new area of study that seeks to understand and balance computer automation and autonomy with the level of human control in a given system. Many of the negative applications of MAB based systems we have discussed so far too often occur because there is too much autonomy given to the system, and it optimizes away from what humans or society considers desirable. By explicitly modeling the underlying bias term, we hope to improve computer aided decision making by understanding and mitigating the dangers that can occur when there are excessive levels of human control or excessive levels of computer autonomy; leading to systems that are more transparent, auditable, and trustworthy.

\textbf{Running Example.}  As a running example throughout the paper, imagine the position of an agent at a bank or a lender on a micro-lending site.  Here, the agent must sequentially pick loans to fund.  In many cases, such as the micro-lending site Kiva, 
a user is presented with a slate of potential loans they may fund when they log in and this slate is generated by a recommender system \cite{Burke:RecFair2020}.  Each of these loans, i.e. arms, has a context which includes attributes of the applicant (e.g., personal statement, repayment history, business plan).  The loans can also be  partitioned into sets of $m$ sensitive attributes, e.g. location, race, or gender.  In the simplest case, assume we have two female applicants and two male applicants on the slate at a given time.  
We also assume that when pulling an arm from, for example, a female applicant, there is some societal bias introduced into the reward. Yet, in many settings (and, as we assume in this work), the average \emph{true} (i.e., unbiased) reward across groups is equal.  We want to balance the number of times the agent selects women versus men given this societal bias built into the feedback.

While we use loans as our running example, our notion of regret could be extended to a number of other areas including recent work in MAB problems on hiring situations \cite{Schumann:2019:TieredInterviewing}, including the recent AAMAS Blue Sky Paper by \citet{Schumann20:Fairness} specifically calling for the community to contribute to fair hiring. One could imagine a situation where hiring decisions are made w.r.t.\ a short-term reward signal that is biased,\footnote{Class-based bias presents itself within seconds of an in-person interview; see https://news.yale.edu/2019/10/21/yale-study-shows-class-bias-hiring-based-few-seconds-speech.} versus a longer-term reward of performance which is less biased, e.g., via an end-of-year review that is based on a more quantitative metric such as on-the-job performance. A similar argument can be made about school admissions or matching workers to online tasks in a crowdwork setting.



\paragraph{Contributions.}
We propose a novel formulation of group fairness in the contextual multi-armed bandit (CMAB) setting.  In our model, arms are partitioned into two or more sensitive groups based on some protected feature, e.g., race. Despite the fact that there may be differences in expected payout between the groups, we may wish to ensure some form of fairness between picking arms from the various groups.  Our goal is to capture the phenomena where we want to balance the arms being pulled from both groups and (learn to) ignore societal bias generated by sensitive group membership.  We define two novel notions of reward and regret to capture implicit societal bias: proportional parity and equal group parity.  We provide a novel algorithm that can accommodate these notions of fairness for an arbitrary number of groups, learn the societal bias term itself, and provide bounds on the regret for our algorithm.  We validate our algorithms using synthetic data and real-world datasets for intervention settings wherein we want to allocate resources fairly across protected groups.\footnote{A full version of this paper, complete with an appendix containing proofs and additional experiments, can be found at \url{https://arxiv.org/abs/1912.03802}.  We will also periodically update this work should typos or other errors be found; if you see any, please feel free to reach out!  Code to reproduce the experiments is also available at \url{https://github.com/candiceschumann/groupfairtreatment}.}





\section{Related Work}



Fairness in machine learning has become one of the most active topics in computer science \cite{Chouldechova:18:FrontiersFairness}.  The idea of using formal notions of fairness, i.e. axioms or properties, to design decision schemes has a long history in economics and political economy \cite{Rawls71a,young1995equity}.
Typically within ML research, fairness is operationalized using 
the Rawlsian idea that similar individuals should be treated similarly; formally extended to the classification setting by \citet{Dwork12:Fairness}, who provided algorithms to ensure individual fairness at the cost of the utility of the overall system.  Their work underscores that in many cases statistical parity is not sufficient to ensure individual fairness, as we may treat groups fairly but in doing so may be very unfair to some specific individual.
Determining when, how, and if to define fairness is an ongoing discussion with roots well before the time of computer science~\cite{Smith76:Wealth}; indeed, it is known that many natural conditions for fairness cannot be achieved in tandem~\cite{Kleinburg16:Inherent}.  Still, group fairness is found in many fielded systems~\cite{Wexler19:WhatIf,Bellamy19:AI}, and we focus on it here.

The study of fairness in MAB was initiated by \citet{Joseph16:FairnessBandits}, who showed for both MAB and CMAB one can implement a fairness definition where within a given pool of applicants, e.g., college admission or mortgages, a worse applicant is not favored over a better one, despite a learning algorithm's uncertainty over the true payoffs.  However, \citet{Joseph16:FairnessBandits} only focus on individual fairness, and do not formally treat the idea of group fairness.  Individual fairness is, in some sense, group fairness taken to an extreme, where every arm is its own singleton group; it offers strong guarantees, but under strong assumptions~\cite{Kearns18:Preventing,Binns20:Apparent}.  

\citet{Celis19:Controlling} propose a bandit-based approach to personalization where arm pulls are constrained to fit some probability distribution defined by a fairness metric such as demographic parity. For example, when recommending news articles, their algorithm provides personalized articles from both left and right sources. Their formulation is perhaps closest in the literature to our formulation as it deals with group fairness, however it does not explicitly assume biased feedback. Instead, it enforces a fair probability distribution without learning about the bias present in the data.


%

There are a number of other recent studies of fairness in the MAB literature.
\citet{chen2020fair} investigate a task allocation setting with a fairness constraint that captures a minimum rate at which a task is assigned to a particular arm; their model is quite general and captures the adversarial and some non-stationary settings.
\citet{liu2017calibrated} look at fairness between arms under the assumption that arm reward distributions are similar (another interpretation of equal treatment of equals).
\citet{patil2019achieving} define fairness such that each arm must be pulled for a predetermined required fraction over the total available rounds.
\citet{claure2019reinforcement} use the MAB framework to distribute resources amongst teammates in human-robot interaction settings; again, fairness is defined as a pre-configured minimum rate that each arm must be pulled.
\citet{Hossain21:Fair} take a more theory-oriented approach to a similar setting, proposing a multi-agent varient of a stochastic MAB setting with a Nash social welfare definition of fairness.
%

Since preliminary versions of this work were presented \cite{Schumann20:Fairness,SLMD:19:FairBandit} there have been several papers that have investigated similar problems. \citet{Wang21:Fairness} look at fairness of exposure in CMAB base systems, specifically focusing on similarity of merit, which is more in line with individual rather than the group fairness we consider here. \citet{Tang21:Generalized} consider a setting inspired by liver transplantation where the objective is to trade off a more egalitarian, max-min, policy in allocating opportunities for surgeons to gain experience in liver transplant training. Finally, \citet{Ron21:Corporate} investigate a setting of allocating opportunities to sub-populations in a corporate decision making setting where each arm needs to pulled at least a budgeted number of times, but where the cost of allocating an opportunity to a non-optimal arm is known in advance. Interestingly, their algorithmt also achieves a $T^{2/3}$ regret, similar to our results.

One needs to be careful when appealing to purely statistical metrics for ensuring fairness.  As argued by \citet{Corbett18MeasureFairness}, simply setting our sights on a form of classification parity, i.e., forcing that some statistical measure be normalized across groups, we may miss bigger picture issues.  Specifically, by only focusing on the statistics of the data we have, we miss an opportunity to identify and understand why the data we have may be \emph{causing} the bias.  Later, we will argue that our novel formalization of regret allows us to actually learn particular sources of bias that may exist in our data.

\section{Preliminaries}

We follow the standard CMAB setting and assume that we are attempting to maximize a measure over a series of time steps $t \in T$. We assume that there is a $d$-dimensional domain for the context space, $\mathcal{X}=\mathbb{R}^d$.  The agent is presented with a set $A$ of arms from which to select and we have $|A| = n$ total arms.  Each of these arms is associated with a, possibly disjoint, context space $\mathcal{X}_i\subseteq \mathcal{X}$.  Additionally, we assume that we have $m$ sensitive groups and that the arms are partitioned into these sensitive groups such that $P_1\cap\cdots\cap P_m=\emptyset$,  $P_1\cup\cdots\cup P_m=A$, and $\forall_{i \in m} |P_i|>1$. For exposition's sake, we assume a binary sensitive attribute with $m=2$ for the remaining of the paper. However, we show the generality of our results to any number of groups in Section~\ref{sec:algorithm}.

Each arm $i$ has a true linear reward function $f_i:\mathcal{X}\rightarrow\mathbb{R}$ such that $f_i(x)=\beta_i\cdot x$ where $\beta_i$ is a vector of coefficients that is unknown to the agent. During each round $t \in T$, a context $x_{t,i}\in\mathcal{X}_i$ is given for each arm $i$. One arm is pulled per round. When arm $i$ is pulled during round $t$, a reward is returned: $r_{t,i}=f_i(x_{t,i})+e_{t,i}$ where $e_{t,i}\sim\mathcal{N}(0,1)$. The goal of the agent is to minimize the regret over all timesteps in $T$.  Formally, the regret of the agent at timestep $t$ is the difference between the arm selected and the best arm that could have been selected.  Let $i^*$ denote the optimal arm that could be selected and $a$ be the selected arm.  Then, the regret at $T$ is

{
\begin{equation}\label{eq:orig_regret}
R(T) = \sum_{t=1}^{T} f(x_{i^*,t}) - f(x_{a,t}).
\end{equation}
}

In this paper we compare our proposed algorithm against three other algorithms: \textsc{TopInterval}, a variation of LinUCB from \citet{Li:10:PersonalRec} with additional annotations to track group membership and treatment of arms, \textsc{NaiveFair} which randomly picks a sensitive group and then applies \textsc{TopInterval} to that group,\footnote{See Section~\ref{sec:naive} for more information} and \textsc{IntervalChaining}, an individually fair algorithm from \citet{Joseph16:Rawlsian}. All algorithms use ordinary least squares (OLS) estimators of the arm coefficients $\hat{\beta}_i$ with a confidence variable $w_{i,t}$ such that the true utility lies within $[\hat{\beta}_i\cdot x_{i,t} - w_{i,t}, \hat{\beta}_i\cdot x_{i,t} + w_{i,t}]$ with probability $1-\delta$. \textsc{NaiveFair} implements a naive version of demographic parity without explicitly looking at societal bias. \textsc{TopInterval} either explores by pulling an arm uniformly at random or exploits by pulling the arm with the highest upper confidence $\hat{\beta}_i\cdot x_{i,t} + w_{i,t}$. To ensure individual fairness, \textsc{IntervalChaining} either explores by choosing an arm uniformly at random or exploits by pulling arms that have overlapping confidence intervals with the arm with the highest upper confidence.


%

\subsection{Regret with Societal Bias}
As mentioned before, ground truth rewards  for sensitive groups can be noisy due to societal or measurement bias. We now formalize this bias in terms of multi-armed bandits. For ease of exposition we assume two groups, but we generalize this in our results.
Again, we assume that $n$ arms can be partitioned into two sets $P_1$ and $P_2$ such that $P_1\cap P_2=\emptyset$ and $P_1\cup P_2=[n]$. We consider $P_1$, with $|P_1| > 1$ as the sensitive set, or set with some societal bias. In this situation, each arm $i$ has another true utility function $f^*(x_{i,t})=\beta_{i}\cdot x_{i,t}$ where $\beta_{i}$ is a vector of coefficients; if arm $i$ is pulled at timestep $t$ the following reward is returned:

{
\begin{equation}\label{eq:new_reward}
r_{i,t}=\beta_{i}\cdot x_{i,t} + \mathbbm{1}[i\in P_1]\psi_{P_1}\cdot x_{i,t} + \mathcal{N}(0,1),
\end{equation}
}

where $\mathbbm{1}[i\in P_1]=1$ when $i\in P_1$ and 0 otherwise, and $\psi_{P_1}$ is a societal or systematic bias against group $P_1$. Note that $\psi_{P_2}$ is a zero vector for the non-sensitive group. Hence, the underlying \emph{biased} utility function can be written as $f(x_{i,t})=\beta_{i}\cdot x_{i,t} + \mathbbm{1}[i\in P_1]\psi_{P_1}\cdot x_{i,t}$.

Using our running example, let's assume that the down payment reward received has some bias against the male applicants compared to the female applicants, while the final repayment does not.  Note that the final repayment is not measured after accepting a loan and is only measured much later. The loan agency should then take the bias into account while learning what `good' applications look like. Or, in a hiring setting, an applicant may have a biased interview (initial reward) while their true performance is measured only after working for a year (later true reward).

We define true regret for pulling an arm $a$ at time $T$ as

{
\begin{equation}\label{eq:new_regret}
    R^*(T) = \sum_{t=1}^{T} f^*(x_{i^*,t}) - f^*(x_{a,t})
\end{equation}
}

where $i^*$ is the optimal arm to pull at timestep $t$ and $f^*(x_{i,t})$ is the true reward with no bias terms $\psi_{P_1}\cdot x_{i,t}$. We also assume that the average true reward (with no bias) for group $P_1$ should be the same as the average reward for group $P_2$. Compare this to Equation~\ref{eq:orig_regret}, which would return the regret on the biased reward function $f(x_{i,t})$.
In the loan agency example, this real regret $f^*(x_{i,t})$ would measure the regret of the final repayments instead of the biased down payment regret.

One can view the societal bias term $\psi_i$ that we learn for some group $i$ as our algorithm learning how to automatically identify and adjust for anti-discrimination for group $i$ compared to all other groups.  Anti-discrimination is the practice of identifying a relevant feature in data and adjusting it to provide fairness under that measure \cite{Corbett18MeasureFairness}.  One example of this, discussed by \citet{Dwork12:Fairness}, \citet{Joseph16:FairnessBandits}, and in the official White House algorithmic decision making statement \cite{Munoz:16:BigData}, comes up in college admissions.  Given other factors, specifically income level, some colleges weight SAT scores \emph{less} in wealthy populations due to the presence of tutors while increasing the weight of working-class populations~\cite{Belkin19:SAT}.  While in these admissions settings the adjustments may be ad-hoc, we learn our bias term from data. Past work has compared the vector $\beta$ learned for each arm as akin to adjusting for these biases \cite{Dwork12:Fairness}.  While this is true at an \emph{individual} level, our explicit modeling of bias allows us to discover these adjustments at a \emph{group} level. 




\section{Group Fair Contextual Bandits}
\label{sec:algorithm}
In this section, given our new definition of reward (Equation~\ref{eq:new_reward}) and corresponding new definition of regret (Equation~\ref{eq:new_regret}), we present the algorithm \textsc{GroupFairTopInterval} (Algorithm~\ref{alg:fair_top_interval}) which takes societal bias into account.  We also give a bound on its regret in this new reward and regret setting.
Subsequently, we briefly describe the algorithm.

In \textsc{GroupFairTopInterval}, each round $t$ is randomly chosen with probability $\frac{1}{t^{1/3}}$ to be an exploration round. The exploration round randomly chooses an arm to pull.



The remaining rounds become exploitation rounds, where linear estimates are used to pull arms. \textsc{GroupFairTopInterval} learns two different types of standard OLS linear estimators~\cite{Kuan04:Classical}. The first is a coefficient vector $\hat{\beta}_{i,t}$ for each arm $i$ (line~\ref{line:beta}). Additionally, \textsc{GroupFairTopInterval} learns a group coefficient vector $\hat{\psi}_{P_j,t}$ for each group $P_j$ (lines~\ref{line:psi_1} and \ref{line:psi_2}). To calculate these coefficient vectors, the algorithm keeps track of previous arm pull rewards for each arm $i$ at every timestep $t$ in a vector $Y_{i,t}$, and the corresponding contexts for each arm pull in a matrix $X_{i,t}$. A similar vector $\mathcal{Y}_{P_j,t}$ and matrix $\mathcal{X}_{P_j,t}$ is kept for both groups $P_j$. As mentioned previously, we treat $P_1$ as the sensitive group of arms. An arm $i$ in the non-sensitive group $P_2$ has a reward estimation of $\hat{\beta}_{i,t}\cdot x_{i,t}$, while an arm $i$ in the sensitive group $P_1$ has a bias corrected reward estimation of 
$\hat{\beta}_{i,t}\cdot x_{i,t} - \hat{\psi}_{P_1,t}+\hat{\psi}_{P_2,t}$.

For each arm $i$, the algorithm calculates confidence intervals $w_{i,t}$ around the linear estimates $\hat{\beta}_{i,t}\cdot x_{i,t}$ using a Quantile function $Q$ (line~\ref{line:w}). This means that the true utility (including some bias) falls within $[\hat{\beta}_{i,t}\cdot x_{i,t} - w_i, \hat{\beta}_{i,t}\cdot x_{i,t}+w_i]$ with probability $1-\delta$ at every arm $i$ and every timestep $t$. Similarly, for each group $P_j$ and context $w_{i,t}$ for a given arm $i$ at timestep $t$, the algorithm calculates a confidence interval $b_{P_j,i,t}$ using a Quantile function $Q$ (lines~\ref{line:psi_1} and \ref{line:psi_2}). This means that the true \emph{group} utility (or true average group utility) falls within $[\hat{\psi}_{P_j,i,t}\cdot x_{i,t} - b_{P_j,i,t}, \hat{\psi}_{P_j,i,t}\cdot x_{i,t} + b_{P_j,i,t}]$ with probability $[1-\delta]$. Using the confidence intervals $w_{i,t}$ and $b_{P_j,i,t}$, and the linear estimates $\hat{\beta}_{i,t}\cdot x_{i,t}$ and $\hat{\psi}_{P_j,i,t} \cdot x_{i,t}$, we calculate the upper bound of the estimated reward for each arm $i$ (lines~\ref{line:upper_1} and \ref{line:upper_2}), pulling the arm with the highest upper bound (line~\ref{line:pull}).

\begin{algorithm}[h]  
\caption{\textsc{GroupFairTopInterval}}\label{alg:fair_top_interval}
\begin{algorithmic}[1]
\REQUIRE $\delta$, $P_1$, $P_2$
\FOR{$t=1\ldots T$}
\STATE With probability $\frac{1}{t^{1/3}}$, play $i_t\in_R\{1,\ldots , n\}$ and observe reward $y_{i_t,t}$
\STATE \textbf{otherwise:}
\begin{ALC@g}
\STATE $\hat{\psi}_{P_1,t} \defeq \left(\mathcal{X}_{P_1,t}^T\mathcal{X}_{P_1,t}\right)^{-1}\mathcal{X}_{P_1,t}^T\mathcal{Y}_{P_1,t-1}$ \label{line:psi_1}
\STATE $\hat{\psi}_{P_2,t} \defeq \left(\mathcal{X}_{P_2,t}^T\mathcal{X}_{P_2,t}\right)^{-1}\mathcal{X}_{P_2,t}^T\mathcal{Y}_{P_2,t-1}$ \label{line:psi_2}
\FOR{$i=1\ldots n$}
\STATE $\hat{\beta}_{i,t} \defeq \left(X_{i,t}^TX_{i,t}\right)^{-1}X_{i,t}^TY_{i,t-1}^T$ \label{line:beta}
\STATE $F_{i,t} \defeq \mathcal{N}\left(0,\sigma^2x_{i,t}\left(X_{i,t}^TX_{i,t}\right)^{-1}x_{i,t}^T\right)$
\STATE $w_{i,t} \defeq Q_{F_{i,t}}\left(\frac{\delta}{2nt}\right)$ \label{line:w}
\IF{$i\in P_1$}
\STATE $\mathcal{F}_{P_1,i,t} \defeq \mathcal{N}\left(0,\sigma^2x_{i,t}\left(\mathcal{X}_{P_1,t}^T\mathcal{X}_{P_1,t}\right)x_{i,t}^T\right)$
\STATE $\mathcal{F}_{P_2,i,t} \defeq \mathcal{N}\left(0,\sigma^2x_{i,t}\left(\mathcal{X}_{P_2,t}^T\mathcal{X}_{P_2,t}\right)x_{i,t}^T\right)$
\STATE $b_{P_1,i,t} \defeq Q_{\mathcal{F}_{P_1,i,t}}\left(\frac{\delta}{2\frac{n}{|P_1|}T}\right)$\label{line:b_1}
\STATE $b_{P_2,i,t} \defeq Q_{\mathcal{F}_{P_2,i,t}}\left(\frac{\delta}{2\frac{n}{|P_2|}T}\right)$\label{line:b_2}
\STATE $\hat{u}_{i,t} \defeq \hat{\beta}_{i,t}\cdot x_{i,t} + w_{i,t} - \hat{\psi}_{P_1,t} \cdot x_{i,t} + b_{P_1,i,t} + \hat{\psi}_{P_2,t}\cdot x_{i,t} + b_{P_2,i,t}$\label{line:upper_1}
\ELSE
\STATE $\hat{u}_{i,t} \defeq \hat{\beta}_{i,t}\cdot x_{i,t} + w_{i,t}$\label{line:upper_2}
\ENDIF
\ENDFOR
\STATE Play $\argmax_i\hat{u}_{i,t}$ and observe reward $y_{i,t}$\label{line:pull}
\end{ALC@g}
\ENDFOR
\end{algorithmic}
\end{algorithm}

Returning to our running example, using \textsc{GroupFairTopInterval}, the loan agency would learn a down payment reward function for each of the arms, i.e., a coefficient vector $\beta_i$ where $i\in[$young female arm, young male arm, older female arm, older male arm$]$, as well as the group average coefficients for the gender-grouped arms, $\psi_{P_j}$, for male and female. Using the gender-grouped coefficients, expected rewards for male arms are reweighted to account for the bias in down payment.

Standard algorithms like \textsc{TopInterval}\footnote{A variant of the contextual bandit LinUCB by \citet{Li:10:PersonalRec}} would choose an arm $i=\argmax(\hat{\beta}\cdot x_{i,t} + w_{i,t})$, ignoring societal bias (Equation~\ref{eq:new_reward}, leading to a larger true regret (Equation~\ref{eq:new_regret})). Note that \textsc{GroupFairTopInterval} can be extended to multiple groups by defining an overall average reward.

\textsc{GroupFairTopInterval} is fair---in the context of our group fairness definitions---and satisfies the following theorem.
Appendix~B of the full paper
provides a detailed, complete proof.


\begin{theorem}\label{thm:two_group_bound}
For two groups $P_1$ and $P_2$, where $P_1$ has a bias offset in rewards, \textsc{GroupFairTopInterval} has regret
\begin{equation*}
\resizebox{.9\hsize}{!}{%
    $R^*(T) = O\left( \sqrt{\frac{dn\ln\frac{2nT}{\delta}}{l}}T^{2/3}  + \left( \frac{dnL}{l}\left(\ln^2\frac{2nT}{\delta}+\ln d\right)\right)^{2/3} \right)$.%
    }
\end{equation*}
\end{theorem}

\begin{hproof}
We start by proving two lemmas. The first of which states that with probability at least $1-\delta$:
{\small
\begin{equation}
    \left| \hat{\beta}_{i,t} \cdot x_{i,t} - (\beta_i \cdot x_{i,t} +  \mathbbm{1}[i\in P_1]\psi_{P_1}\cdot x_{i,t}) \right|\leq w_{i,t}
\end{equation}
}
holds for any $i$ at time $t$. Similarly, the second states that with probability at last $1-\delta$:
{\small
\begin{equation}
    \left| \hat{\beta}_{i,t} \cdot x_{i,t} - \beta_i \cdot x_{i,t}\right|\leq w_{i,t}
\end{equation}
}
holds for any group $P_j$, any arm $i$, and at any timestep $t$.
By combining these two lemmas, we can see that arms should be treated fairly.

The regret for \textsc{GroupFairTopInterval} can be broken down into three terms:
\begin{align}\label{eq:ps_3split}
    R^*(T) &= \sum_{t:\ t\textit{ is an explore round}} \mathit{regret}(t)  \\
    &+ \sum_{t:\ t\textit{ is an exploit round and } t<T_1}\mathit{regret}(t) \nonumber \\
    &  + \sum_{t:\ t\textit{ is an exploit round and }t\geq T_1}\mathit{regret}(t).
\end{align} 

First, for any $t$ we have:
{\small
\begin{equation}\label{eq:ps_term1}
    \sum_{t'<t}\frac{1}{t^{1/3}}=\Theta(t^{2/3}).
\end{equation}
}
We then show that the number of rounds $T_1$ after which we have sufficient samples such that the estimators are well concentrated is:
{\small
\begin{equation}\label{eq:ps_term2}
    T_1=\Theta\left(\min_{a}\left(\frac{dnL}{\lambda_{min_{a,d}}}\left(\ln^2\frac{2}{\delta} + \ln d\right)\right)^{3/2}\right).
\end{equation}
}
Finally, we bound the third term in Equation~\ref{eq:ps_3split} as follows:
{\small
\begin{align}\label{eq:ps_term3}
    &\sum_{t:\ t\textit{ is an exploit round and }t\geq T_1}regret(t) \\
    &\leq O\left(\sqrt{dn\frac{\ln\frac{2nT}{\delta}}{\min_i\lambda_{\min_{i,d}}}}T^{2/3} + \delta'T\right).
\end{align}
}
Combining Equations~\ref{eq:ps_3split}, \ref{eq:ps_term1}, \ref{eq:ps_term2}, and \ref{eq:ps_term3}, we have Theorem~\ref{thm:two_group_bound}.
\end{hproof}

Note that we can extend Algorithm~\ref{alg:fair_top_interval} to $m$ groups. In this setting, we make the strong assumption that true rewards are centered about $\rho$ defined by the user.\footnote{See 
Appendix~B.2
for further details.} In this adaption of the algorithm, we set the upper bound radius for arm $i$ as:

{
\begin{equation*}
    \hat{u}_{i,t}=\hat{\beta}_{i,t}\cdot x_{i,t} + w_{i,t} + \rho - \hat{\psi}_{P_j,t} \cdot x_{i,t} + b_{P_j,i,t}
\end{equation*}
}

where $i\in P_j$. We then have the following theorem for multiple groups:
\begin{theorem}
For $m$ groups $P_1,\ldots,P_m$, 
\textsc{GroupFairTopInterval (Multiple Groups)} has regret
\begin{equation*}\label{eq:mult_group_thm}
\resizebox{.9\hsize}{!}{%
   $R^*(T)\ =\ O\left( \sqrt{\frac{dn\ln\frac{2nT}{\delta}}{l}}T^{2/3} + \left( \frac{dnmL}{l}\left(\ln^2\frac{2nT}{\delta}+\ln d\right)\right)^{2/3} \right)$.%
    }
\end{equation*}
where $l=\min_i\lambda_{min_{i,d}}$, with $\lambda_{min_{i,d}}$ the smallest eigenvalue of $X_{i,t}^T X_{i,t}$; and $L>\max_t\lambda_{\max}(x_{i,t}^Tx_{i,t})$.
\end{theorem}

\section{Experiments}
\begin{figure*}
\centering
\begin{subfigure}[t]{0.18\textwidth}
   \includegraphics[width=1\linewidth]{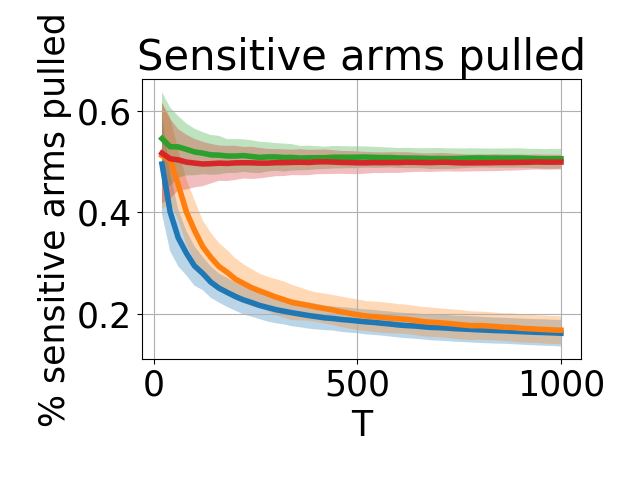}
   \caption{Increasing the total budget $T$, for $n=10$, $\mu=10$, and number of sensitive arms = $5$}
   \label{fig:pulls_T} 
\end{subfigure}
\quad
\begin{subfigure}[t]{0.18\textwidth}
   \includegraphics[width=1\linewidth]{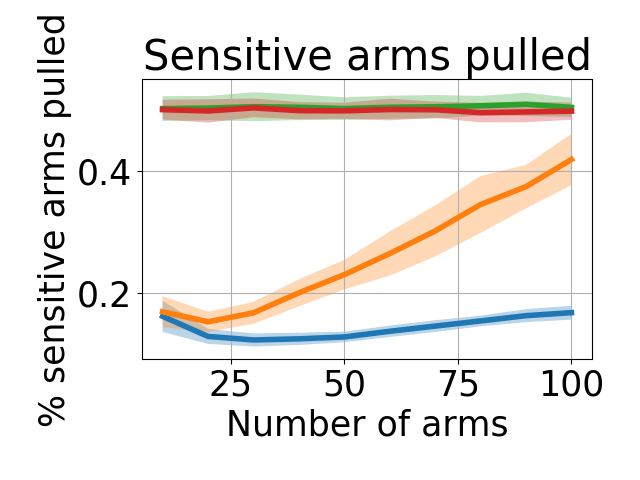}
   \caption{Increasing the number of arms $n$, for $T=1000$, $\mu=10$, and number of sensitive arms = $5$}
   \label{fig:pulls_arms}
\end{subfigure}
\quad
\begin{subfigure}[t]{0.18\textwidth}
   \includegraphics[width=1\linewidth]{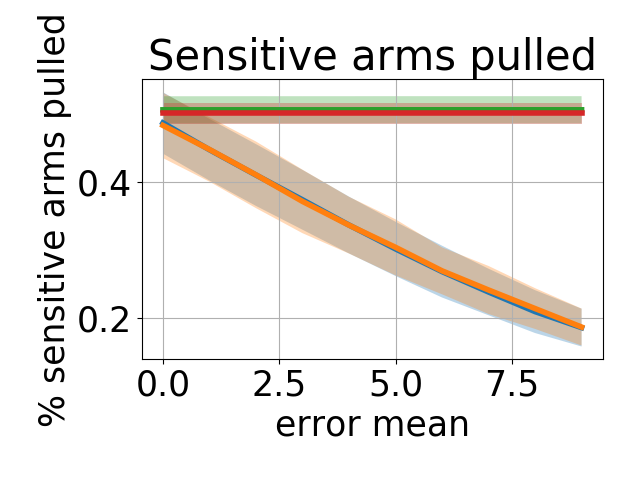}
   \caption{Increasing $\mu$, for $n=10$, $T=1000$, and number of sensitive arms = $5$}
   \label{fig:pulls_error}
\end{subfigure}
\quad
\begin{subfigure}[t]{0.18\textwidth}
   \includegraphics[width=1\linewidth]{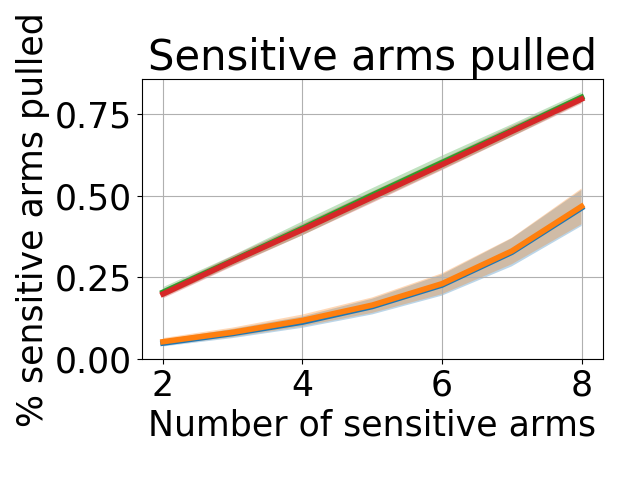}
   \caption{Increasing the fraction of overall sensitive arms, for $n=10$, $T=1000$, $\mu=10$}
   \label{fig:pulls_ratio}
\end{subfigure}
\quad
\begin{subfigure}[t]{0.18\textwidth}
   \includegraphics[width=1\linewidth]{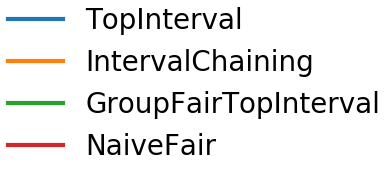}
   \caption{Legend}
   \label{fig:pulls_legend}
\end{subfigure}

\caption{Percentage of total arm pulls that were pulled using sensitive arms.}
\label{fig:synth_pulls}
\end{figure*}

\begin{figure*}
\centering
\begin{subfigure}[t]{0.18\textwidth}
   \includegraphics[width=1\linewidth]{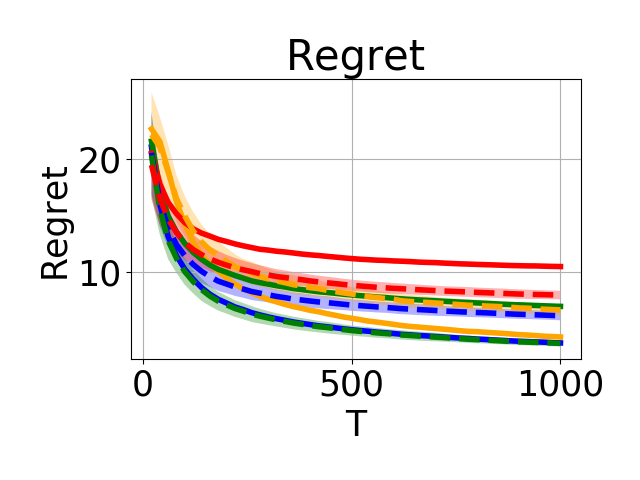}
   \caption{$n=10$, $\mu=10$, \# of sensitive arms~=~5}
   \label{fig:regret_T} 
\end{subfigure}
\quad
\begin{subfigure}[t]{0.18\textwidth}
   \includegraphics[width=1\linewidth]{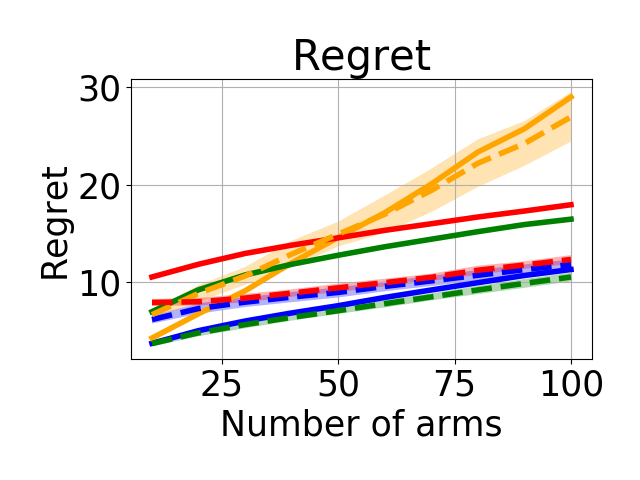}
   \caption{$T=1000$, $\mu=10$, \# of sensitive arms~=~5}
   \label{fig:regret_arms}
\end{subfigure}
\quad
\begin{subfigure}[t]{0.18\textwidth}
   \includegraphics[width=1\linewidth]{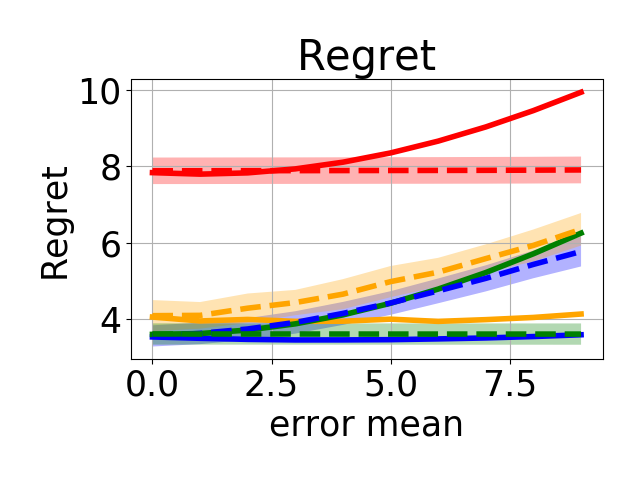}
   \caption{$n=10$, $T=1000$, \# of sensitive arms~=~5}
   \label{fig:regret_error}
\end{subfigure}
\quad
\begin{subfigure}[t]{0.18\textwidth}
   \includegraphics[width=1\linewidth]{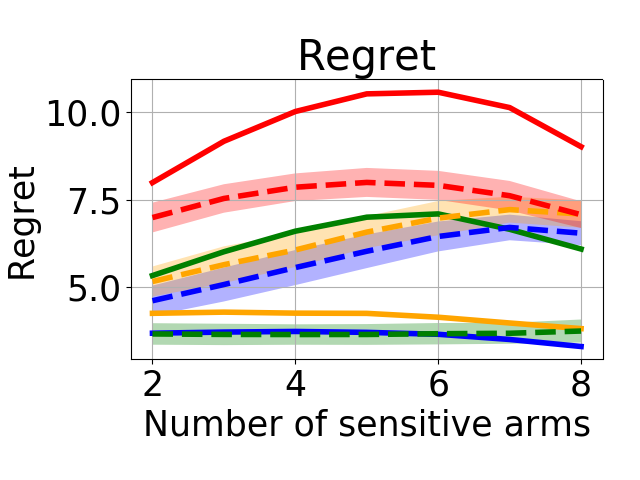}
   \caption{$n=10$, $T=1000$, $\mu=10$}
   \label{fig:regret_ratio}
\end{subfigure}
\quad
\begin{subfigure}[t]{0.18\textwidth}
   \includegraphics[width=1\linewidth]{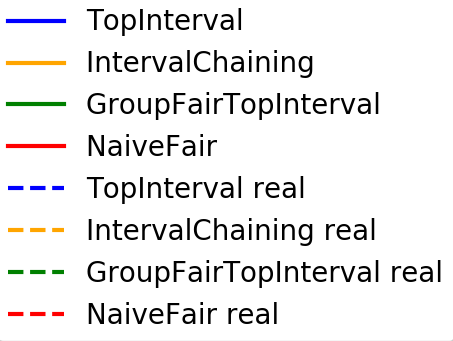}
   \caption{Legend}
   \label{fig:regret_legend}
\end{subfigure}

\caption{Regret for synthetic experiments. The solid lines are regret given the rewards received from pulling the arms (including the group bias). The dashed lines is the true regret (without the group bias).}
\label{fig:synth_regret}
\end{figure*}
To empirically evaluate \textsc{GroupFairTopInterval}, we perform experiments on synthetic data to demonstrate the effects of various parameters, and on real datasets to demonstrate how \textsc{GroupFairTopInterval} performs in the wild. In each of these sections we compare to \textsc{TopInterval}, due to~\citet{Li:10:PersonalRec}, \textsc{NaiveFair} (See Section~\ref{sec:naive}), and \textsc{IntervalChaining}, due to~\citet{Joseph16:FairnessBandits}.

\subsection{\textsc{NaiveFair}}\label{sec:naive}
One popular definition of group fairness in classification is the notion of demographic parity.  Formally, given a protected demographic group $A$, we want:

{
 \begin{equation}
     \Pr(\hat{Y}=1 | A=0) = \Pr(\hat{Y}=1 | A=1),
 \end{equation}
}
 
where the probability of assigning a classification label $\hat{Y}=1$ does not change based on the sensitive attribute class $A$. Demographic parity is important when ground truth classes $Y$ are extremely noisy for sensitive groups due to some societal or measurement bias. Assume that we have a classifier that predicts whether an individual should receive a loan where our sensitive attribute $A$ is binary gender. Demographic parity states that the probability of getting a loan should be the same for males ($A= 0$) and females ($A=1$).

In converting this definition of demographic parity to the the multi-armed bandit setting, we alter the definition to be that the probability of pulling an arm $a$ does not change based on group membership $P_j$: 

{
\begin{equation}
    \Pr(\textit{pull } a | a\in P_0) = \Pr(\textit{pull } a | a\in P_1).
\end{equation}
}


Continuing our running example, assume we are a loan agency. The loan agency receives 4 applications at every timestep $t$: an applicant from a young female, an applicant from a young male, an applicant from a older female, an applicant from an older male; we must choose one application to grant at each timestep. After granting a loan the loan agency receives a down payment on that loan as reward. This reward is then used to update the estimates of whether or not a ``good'' loan application was received for the pulled arm. Assume that the loan agency wants to act fairly using the binary sensitive attribute of gender.  Then, the probability that the loan agency chooses a female applicant at timestep $t$ should be the same as the probability of choosing a male applicant.

\begin{algorithm}
\caption{\NaiveGroupFair{}}\label{alg:naive}
\begin{algorithmic}[1]
\REQUIRE $\delta$, $P_1$, $P_2$
\FOR{$t=1\ldots T$}
\STATE $P \gets$ Randomly choose group $P_1$ or $P_2$.
\STATE Pull arm in $P$ based on \textsc{TopInterval}
\ENDFOR
\end{algorithmic}
\end{algorithm}

A naive algorithm to enforce this definition of fairness is defined in Algorithm~\ref{alg:naive}. We first pick from the groups uniformly at random, and then apply a regular CMAB algorithm like \textsc{TopInterval}\footnote{\textsc{TopInterval} is a variant of LinUCB by \citet{Auer:02:UCB}.} or \textsc{ContextualThompsonSampling} \cite{Agrawal:13:ThompsonSampling} to choose which arm to pull within the group. Using our running example, \NaiveGroupFair{} would randomly pick between male or female, and then choose the best applicant between the younger and older pair.

\subsection{Synthetic Experiments}\label{sec:synth}

In each synthetic experiment, we generate true coefficient vectors $\beta_i$ by choosing coefficients uniformly at random for each arm $i$. Contexts at each timestep $t$ are chosen randomly for each arm $i$. Bias coefficients $\psi_1$ are set uniformly at random with mean $\mu=10$. Seeds are set at the beginning of each experiment to keep arms consistent between algorithms.

\begin{figure*}
\centering
\begin{subfigure}[t]{0.23\textwidth}
   \includegraphics[width=1\linewidth]{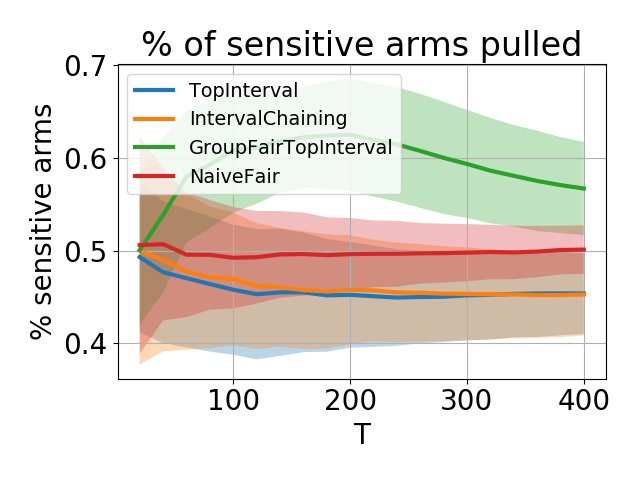}
   \caption{Sensitive arm pulls (\%)}
   \label{fig:family_pulls}
\end{subfigure}
\quad
\begin{subfigure}[t]{0.23\textwidth}
   \includegraphics[width=1\linewidth]{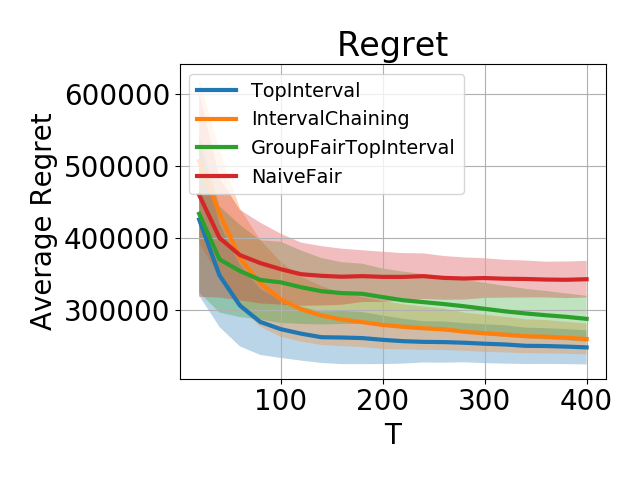}
   \caption{Regret}
   \label{fig:family_regret} 
\end{subfigure}
\quad
\begin{subfigure}[t]{0.23\textwidth}
   \includegraphics[width=1\linewidth]{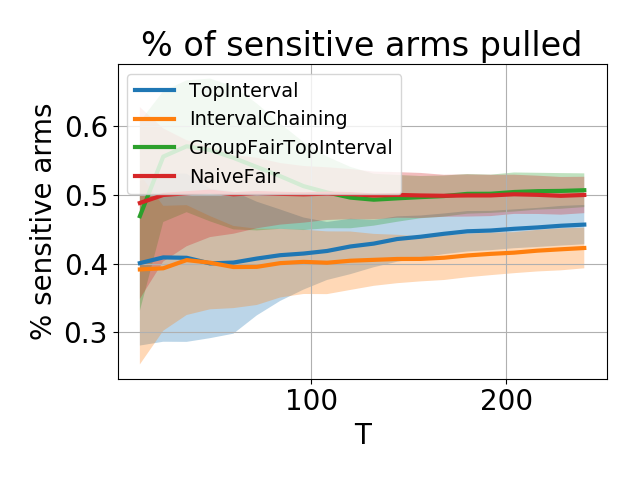}
   \caption{Sensitive arm pulls (\%)}
   \label{fig:compas_pulls}
\end{subfigure}
\quad
\begin{subfigure}[t]{0.23\textwidth}
   \includegraphics[width=1\linewidth]{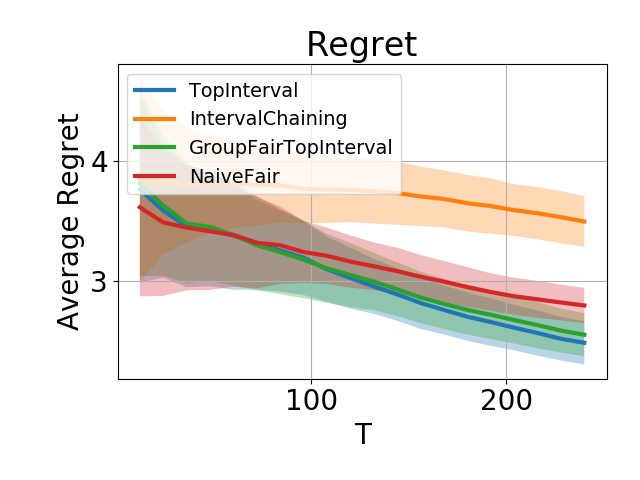}
   \caption{Regret}
   \label{fig:compas_regret} 
\end{subfigure}

\caption{Results of running contextual bandit algorithms on the family income and expenditure dataset (Figures~\ref{fig:family_pulls} and \ref{fig:family_regret}), as well as the COMPAS dataset (Figures~\ref{fig:compas_pulls} and \ref{fig:compas_regret}). Figures~\ref{fig:family_pulls} and \ref{fig:compas_pulls} show the percentage of pulls that were of sensitive arms. Figures~\ref{fig:family_regret} and \ref{fig:compas_regret} show the biased regret for each of the algorithms. Note that the ``real'' regret like that shown in the synthetic experiments cannot be calculated.}
\end{figure*}

We run four different types of experiments:\footnote{Additional experiments can be found in 
Appendix~C.%
}
\begin{inparaenum}[\bfseries a\normalfont)]
    \item Varying the total budget for pulling arms ($T$) while setting number of arms $n=10$, error mean $\mu=10$, number of sensitive arms equal to 5, and context dimension $d=2$ (Figures~\ref{fig:regret_T} and \ref{fig:pulls_T}).
    \item Varying the total number of arms $n$ while setting total budget $T=1000$, error mean $\mu=10$, ratio of sensitive arms to 50\%, and context dimension $d=2$ (Figures~\ref{fig:regret_arms} and \ref{fig:pulls_arms}).
    \item Varying the error mean $\mu$ while setting total budget $T=1000$, number of arms $n=10$, number of sensitive arms equal to 5, and context dimension $d=5$ (Figures~\ref{fig:regret_error} and \ref{fig:pulls_error}).
    \item Varying the number of sensitive arms while setting total budget $T=1000$, number of arms $n=10$, error mean $\mu=10$, and context dimension $d=2$ (Figures~\ref{fig:regret_ratio} and \ref{fig:pulls_ratio}).
\end{inparaenum}

The plots in Figure~\ref{fig:synth_pulls} show the percentage of times an algorithm pulled a sensitive arm over the full budget $T$. In order to be fair, the percentage of sensitive arms pulled should be proportional to the number of sensitive arms, i.e., when there are 5 sensitive arms out of 10 total, the percentage of sensitive arms pulled is roughly 50\%. Figure~\ref{fig:synth_regret} shows the perceived regret that includes bias $\psi$ as solid lines, and real regret that corrects bias (see Equations~\ref{eq:new_reward} and \ref{eq:new_regret}) as dashed lines. Algorithms with low real regret are considered `good'.

Figure~\ref{fig:pulls_T} shows that once exploration is over, \textsc{GroupFairTopInterval} pulls sensitive arms roughly 50\% of the time, matching the 50\% of sensitive arms. Figure~\ref{fig:regret_T} shows that \textsc{GroupFairTopInterval} performs comparably on real regret as \textsc{TopInterval} performs on biased regret. This means \textsc{GroupFairTopInterval} should be used over \textsc{TopInterval} in contexts where bias is anticipated.  \textsc{NaiveFair} performs poorly in the context of societal bias.

Figure~\ref{fig:pulls_arms} illustrates that \textsc{IntervalChaining} becomes more group fair as the number of arms increase. This is because many arms are chained together and therefore, arms are chosen uniformly at random. Figure~\ref{fig:regret_arms} illustrates this random picking of arms as real regret and biased regret increases dramatically for \textsc{IntervalChaining}. 

As expected, Figure~\ref{fig:pulls_error} illustrates that when the error mean  $\mu$ is large, both \textsc{IntervalChaining} and \textsc{TopInterval} choose fewer sensitive arms. This leads to a high real regret as shown in Figure~\ref{fig:regret_error}. Following \citet{Kleinburg16:Inherent}, Figure~\ref{fig:regret_error} also suggests that one cannot have both individual \emph{and} group fairness in a scenario with high mean error. The randomness in \textsc{NaiveFair} leads to a very high regret for both perceived regret and real regret.

Figure~\ref{fig:pulls_ratio} demonstrates the fairness property of proportionality. The percentage of sensitive arms pulled by \textsc{GroupFairTopInterval} matches the number of sensitive arms. As shown in Figure~\ref{fig:regret_ratio}, the number of sensitive arms does not affect the real regret of \textsc{GroupFairTopInterval}.



\subsection{Experiments on Real-World Data}

After exploring \textsc{GroupFairTopInterval} on synthetic data, we move on to using both the Philippines family income and expenditure dataset on Kaggle\footnote{\tiny https://www.kaggle.com/grosvenpaul/family-income-and-expenditure} and the ProPublica COMPAS dataset.\footnote{\tiny https://www.kaggle.com/danofer/compass} When one looks at the gender and age breakdown in the family income dataset, one can see that quite often female heads of households make more money than males in the Philippines. This is most likely due to the large number of Filipino women who work out of the country; it is estimated that up to 20\% of the GDP of the Philippines is actually remittances from these overseas---primarily female---workers.\footnote{\tiny https://www.nationalgeographic.com/magazine/2018/12/filipino-workers-return-from-overseas-philippines-celebrates/}
In fact, almost 60\% of overseas workers are women and 75\% of these women are between the ages of 25 and 44.\footnote{\tiny https://psa.gov.ph/content/2017-survey-overseas-filipinos-results-2017-survey-overseas-filipinos} In the COMPAS dataset ProPublica observed a societal bias over recidivism risk scores for African-Americans.\footnote{\tiny https://www.propublica.org/article/machine-bias-risk-assessments-in-criminal-sentencing}

\paragraph{Experimental Setup.}
Given the skew of high income coming from female head of households in the family income dataset, we treat the binary `Household Head Sex' feature as the sensitive attribute. To create arms, we split up households based on `Household Head Age' bucketed into the following five groups: (8, 27], (27, 45], (45, 63], (63, 81], and (81, 99]. We then have 10 different arms (for example, two arms would be Female head of household between 8 and 27, and Male head of household between 8 and 27). 

Similarly, we treat African-American individuals from the COMPAS dataset as the sensitive attribute. We create arms by splitting up households based on the three age categories found in the data. We therefore have six different arms.

At each timestep $t$, we randomly select an individual from each arm. The context vector is the remaining features where any nominal features are transformed into integers. After an arm is pulled, a reward of the household income (for the family income dataset) or violent decile score (for COMPAS) is returned. We use these datasets for illustrative purposes.


\paragraph{Results.}
We see the same behavior of arm pulls in the real world data. Figures~\ref{fig:family_pulls} and~\ref{fig:compas_pulls} show that after a period of exploration, the percentage of sensitive arms (male-grouped arms) pulled gets very close to 50\%, matching the proportion of sensitive-grouped arms. 

Figures~\ref{fig:family_regret} and~\ref{fig:compas_regret} are perhaps more interesting. Since we cannot measure the ``real'' regret without the bias we assumed from the sensitive-grouped arms, we consider the gap between \textsc{GroupFairTopInterval} and \textsc{TopInterval} as the price of fairness. The gap in regret is small compared to the increase in percentage of sensitive arms pulled. However, the gap in regret for \textsc{NaiveFair} is large in comparison. This suggests that explicitly learning a societal bias term will help in biased settings with low price to perceived regret. Note that there is a difference between regret scales for the two different datasets. This is due to the family income and expenditure dataset reporting regret in income, while the COMPAS dataset reports regret in recidivism score.

\section{General Discussion \& Ethical Implications of the Work}\label{ap:ethics}

This work was directly motivated by research into bias found in machine learning models. There have also been calls to action for more research to be done on bias mitigation in online learning settings~\cite{Chouldechova:18:FrontiersFairness,Bogen18:Help}, specifically in multi-armed bandit settings~\cite{Schumann20:Fairness} and related areas such as recommender systems~\cite{Singh20:Building,Burke:AlgoFairness,Burke:RecFair2020}. Directly addressing these calls, in this work we propose a method of alleviating societal or measurement bias introduced into reward feedback. Using our CMAB model should help mitigate biased behaviors found in bandit systems currently in use~\cite{Sweeney:2013:AdDelivery}. 

Additionally, as noted by \citet{ONeil16:Weapons}, models can provide a biased feedback loop. We hope that by incorporating a societal bias term we can learn something about the bias that is being introduced. The coefficient vector will show which features are incorporating bias into the model. This allows users to address these features outside of the model and potentially find the sources of the societal bias. We do note that addressing societal bias and fixing the solution is a nontrivial task, the societal bias term provides the initial step of measurement.

On the other hand, as noted by~\citet{Schumann20:Fairness},~\citet{Shneiderman20:Human}, and many others, humans should still be active participants in decision making. If models such as our CMAB model are used to replace more and more human decision makers, this could have unintended and potentially negative medium- and long-term side effects. All models should be monitored for biased feedback loops in the given contexts that they are being used~\cite{ONeil16:Weapons}.

Choosing a particular definition of fairness---conditioned on deciding that it is even appropriate to formally define a notion of fairness in the first place---is a morally-laden decision.  We note that, as machine learning practitioners, in many societally-relevant applications it is paramount that we maintain an open dialogue with stakeholders.  In this work, we analyze a sequential decision-making system under one particular definition, group fairness; it is certainly not the case that this is a one-size-fits-all solution that would be deployable without receiving input from that larger set of stakeholders.  Indeed, recent research~\cite{Saha20:Measuring} shows that non-expert users may have vastly varying degrees of comprehension of different definitions of fairness, and that the degree of comprehension may be a function in part of education level and other features that may correlate with measures of marginalization; this hints that the consequences of incorporating fairness definitions into machine-learning-based systems may not be uniformly understood by participants, and indeed that those participants who may be impacted the most by that change could comprehend that potential impact the least.  In an allocative system like the one we describe in the main paper, nuanced considerations must be considered.

Our new definitions of reward (Equation~\ref{eq:new_reward}) and regret (Equation~\ref{eq:new_regret}) for the MAB setting provide an opportunity to look at biased data in a new light. In many cases, ground truths provided during learning are noisy with respect to sensitive groups. Additionally, debiased ground truths may be very expensive to receive or may take a long time to acquire. For instance, if looking at loans, true rewards of repayment may take years to receive. Or, for example, in hiring---the true reward of hiring an individual may take over a year to estimate, while the initial estimate may be influenced by a hiring team's unconscious bias over features such as ethnicity, gender, or orientation.

Our proposed algorithm, \textsc{GroupFairTopInterval}, learns societal bias in the data while still being able to differentiate between individual arms.
Previous solutions relied on setting ad-hoc thresholds, requiring some form of quota, or choosing groups uniformly at random. While it is true that \textsc{GroupFairTopInterval} can easily be extended to a case where we know that the average of a group is a constant offset from the other group. That being said, defining such offsets raises a host of other ethical questions. For instance, in the US, the EEOC (Equal Employment Opportunity Commision) poposed that the ratio of the most favored group compared to the least favored group must not be less than 0.8. Meanwhile, this type of comparison is forbidden in some countries~\cite{Lieberman01:Tale}.  In any case, these prior solutions either lead to high regret, or require a large amount of domain knowledge for the chosen application.

\section{Conclusion \& Future Research}

This paper explores group fairness in the contextual multi-armed bandit (MAB) setting. Our main contributions are:
    (1) we provide a new definition of reward and regret which captures societal bias;
    (2) we provide an algorithm that learns and corrects for that definition of societal bias; and
    (3) we empirically explore the effects different CMAB algorithms have in the setting of societal bias.

Future work could expand \textsc{GroupFairTopInterval} to enforce individual fairness within groups. Intersectional group fairness is also important to look at in the MAB setting where more than one type of sensitive attribute needs to be protected. Additionally, other group fairness definitions such as Equalized Opportunity should be converted to the MAB setting~\cite{Hardt16:Equality}. Another interesting direction for future work is to mix ideas from the study of budget constrained bandits \cite{ding2013multi,wu2015algorithms} with our fairness definitions. We have also assumed individual arms have fixed group membership; generalizing to a setting where memberships in protected groups may change at every timestep $t$ would fit more real world applications.




\begin{acks}
Dickerson and Schumann were supported by 
NSF CAREER Award IIS-1846237, 
NSF Award CCF-1852352, 
NSF Award SMA-2039862,
NIST MSE Award \#20126334, 
DARPA GARD \#HR00112020007, 
DARPA SI3-CMD \#S4761, 
DoD WHS Award \#HQ003420F0035, 
and ARPA-E DIFFERENTIATE Award \#1257037.  Mattei was supported by NSF Awards IIS-RI-2007955 and IIS-III-2107505.  We thank Aviva Prins and Aravind Srinivasan for helpful comments on earlier versions of this paper, and the anonymous reviewers for helpful comments and constructive critiques.
\end{acks}

{
\bibliographystyle{ACM-Reference-Format} 
\balance
\bibliography{ref}
}

\clearpage
\appendix


\section{Additional Related Research}\label{ap:relwork}
We now discuss, and appropriately compare and contrast, additional related research in settings similar to some aspects of our own model.

\noindent\textbf{Fairness in ranking.}  A closely related area to our work is the research into fairness in rankings \cite{Singh:2018:ExposureRankings}, multi-stakeholder recommender systems \cite{Abdollahpouri:2019:Recommendation}, and item allocation \cite{benabbou2018diversity,Benabbou:2019:Housing}.  When algorithms return rankings for an individual to select from, one must pay attention to the ordering and the positioning of various groups \cite{Singh:2018:ExposureRankings}.  One can see this as an application of the group fairness concept to the slates that are chosen for display.  A particular aspect of recommendation systems that one needs to keep in mind is that often there are different stakeholders: the person receiving the recommendation, the company giving the recommendation, and the businesses that are the subjects of recommendation \cite{Abdollahpouri:2019:Recommendation}.  Finally, when goods are allocated, such as housing or subsidies one may need to observe both individual and group fairness  \cite{benabbou2018diversity}.  Indeed, group fairness is specifically important in, e.g., Singapore, which has specifically enforced notions of group fairness when allocating public housing \cite{Benabbou:2019:Housing}.

\noindent\textbf{Constrained reasoning in MAB.}  There is also significant recent work in constrained reasoning in the MAB setting.  
\citet{balakrishnan2018incorporating} study the idea of learning constraints over pulling arms by observation in a pre-training phase.
%
\citet{wu2015algorithms} study constraints in both number of pulls per arm, as well as number of rounds where arms are available to be pulled.
\citet{wu2016conservative} study a different flavor of constrained bandits where the learned policy cannot fall below a certain threshold; modeling the case where one wants to explore, but not suffer too much of a penalty over a status-quo policy.
A related and perhaps interesting direction for future work is the work on bandits that are budget-constrained (without fairness considerations).
\citet{ding2013multi} study budget-constrained bandits where each arm also has an unknown cost distribution and one must learn a policy that maximizes reward and minimizes cost.
Our formulation is not captured in the current literature on constrained and budgeted bandits and it is not obvious how to formalize a budget constraint as an inter-group fairness constraint. Indeed, a simplistic version of this would just lead to exhausting the budget of the ``better'' arm pulls before moving to the next best.

\noindent\textbf{Legal motivation.}  Fairness in bandits is a particularly important area as the online, dynamic nature makes the task challenging and the use of bandits in a number of areas makes the problem particularly relevant.  The motivating factor for group fairness is that one does not want to cause disparate impact, or the idea that groups should be treated differently based only on non-relevant aspects \cite{Feldman:15:DisparateImpact}.  Indeed, discrimination in certain areas including housing, credit, and jobs is forbidden in the US by the Civil Rights Act of 1965. It is specifically in these areas where bandit algorithms are deployed: advertising (where discrimination has been found) \cite{Sweeney:2013:AdDelivery},
college admissions~\cite{Schumann:2019:CohortSelection}, and interviewing~\cite{Schumann:2019:TieredInterviewing}. 


\section{Proofs}\label{ap:proofs}


\subsection{Two Groups}

In order to prove Theorem~\ref{thm:two_group_bound}, we first prove two lemmas.

\begin{lemma}\label{lemma:w}
The following holds for any $i$ at any time $t$, with probability at least $1-\delta$:
\begin{equation}\label{eq:w_inequality}
    \left| \hat{\beta}_{i,t} \cdot x_{i,t} - (\beta_i \cdot x_{i,t} +  \mathbbm{1}[i\in P_1]\psi_{P_1}\cdot x_{i,t}) \right|\leq w_{i,t}.
\end{equation}
\end{lemma}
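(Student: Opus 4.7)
The plan is to unpack the definition of $\hat{\beta}_{i,t}$ and observe that, because group membership is fixed for each arm, the bias term $\mathbbm{1}[i\in P_1]\psi_{P_1}\cdot x$ appears identically in every reward observation drawn from arm $i$ up to time $t$. Consequently, the OLS estimator $\hat{\beta}_{i,t}$ is actually fitting a linear model whose true coefficient vector is $\tilde{\beta}_i := \beta_i + \mathbbm{1}[i\in P_1]\psi_{P_1}$, not $\beta_i$ itself. Once this reframing is made, the lemma reduces to a standard Gaussian prediction-interval bound for ordinary least squares applied to the shifted coefficient $\tilde{\beta}_i$, and the expression $\hat{\beta}_{i,t}\cdot x_{i,t} - (\beta_i + \mathbbm{1}[i\in P_1]\psi_{P_1})\cdot x_{i,t}$ that appears on the left of \eqref{eq:w_inequality} is exactly the usual prediction-error quantity.

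Concretely, by the reward model in Equation~\ref{eq:new_reward}, the stacked response vector on arm $i$'s past pulls satisfies $Y_{i,t} = X_{i,t}\tilde{\beta}_i + \varepsilon$, where $\varepsilon$ is a vector of i.i.d.~$\mathcal{N}(0,1)$ noise, one entry per past pull. Plugging into the closed form for $\hat{\beta}_{i,t}$ gives
\[
\hat{\beta}_{i,t} - \tilde{\beta}_i \;=\; \left(X_{i,t}^T X_{i,t}\right)^{-1} X_{i,t}^T\varepsilon,
\]
and projecting onto the query context $x_{i,t}$ yields
\[
\hat{\beta}_{i,t}\cdot x_{i,t} - \tilde{\beta}_i\cdot x_{i,t} \;=\; x_{i,t}\left(X_{i,t}^T X_{i,t}\right)^{-1} X_{i,t}^T\varepsilon,
\]
which is a deterministic linear functional of a standard Gaussian vector. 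Its distribution is therefore exactly $\mathcal{N}\!\left(0,\, \sigma^2 x_{i,t}\left(X_{i,t}^T X_{i,t}\right)^{-1} x_{i,t}^T\right) = F_{i,t}$, matching the Gaussian defined in the algorithm. By the definition of $w_{i,t} = Q_{F_{i,t}}(\delta/(2nt))$ as the appropriate tail quantile, the event $|\hat{\beta}_{i,t}\cdot x_{i,t} - \tilde{\beta}_i\cdot x_{i,t}| > w_{i,t}$ has probability at most $\delta/(2nt)$ for a single arm at a single round, and a union bound over the $n$ arms at time $t$, together with the factor of $t$ built into the quantile level when ranging over timesteps, yields overall failure probability at most $\delta$.

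The main obstacle I anticipate is handling the adaptivity of the design matrix: the rows of $X_{i,t}$ are selected on the basis of past contexts and past noise through the algorithm's decisions, so the rows are not unconditionally independent of $\varepsilon$. The correct way to handle this is to condition on the filtration generated by the arm-pull history and the observed contexts through time $t$; by construction each $\varepsilon_{i,s}$ is independent of everything observed strictly before round $s$, so conditionally on this filtration $\varepsilon$ is still an i.i.d.~standard Gaussian vector and the distributional identity above holds verbatim. Passing back to the unconditional statement via the tower property delivers the claimed concentration. This measurability step, rather than any algebraic manipulation, is the delicate part of a rigorous write-up; everything else is a direct computation with the OLS normal equations and a one-line union bound.
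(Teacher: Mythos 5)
Your proposal is correct and takes essentially the same route as the paper's proof: the paper likewise observes that the OLS estimator targets the shifted coefficient $C_i = \beta_i + \psi_{P_1}$ for arms in $P_1$ (splitting into the two cases $i \in P_1$ and $i \notin P_1$ rather than writing the unified target $\tilde{\beta}_i$), identifies the prediction error with the Gaussian $F_{i,t}$, applies the quantile definition of $w_{i,t}$, and union-bounds over all $n$ arms and $T$ timesteps. Your closing remark about conditioning on the filtration to handle the adaptively chosen design matrix addresses a measurability point that the paper's proof silently elides by appealing to ``standard properties of OLS estimators.''
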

\begin{proof}
There are two cases: $i\in P_1$ or $i\not\in P_1$.

Focusing on the first case, inequality~\ref{eq:w_inequality} becomes:
\begin{equation*}
    \left| \hat{\beta}_{i,t} \cdot x_{i,t} - \beta_i \cdot x_{i,t}\right|\leq w_{i,t}.
\end{equation*}
By the standard properties of OLS estimators~\cite{Kuan04:Classical},
\begin{equation*}
    \hat{\beta}{i,t}\sim\mathcal{N}\left(\beta_{i},\sigma^2(X_{i,t}^T,X_{i,t})^{-1}\right).
\end{equation*}
Then, for any fixed $x_{i,t}$:
\begin{equation*}
    \hat{\beta}_{i,t}\cdot x_{i,t}\sim\mathcal{N}\left(\beta_{i}\cdot x_{i,t},x_{i,t}^T\sigma^2(X_{i,t}^T,X_{i,t})^{-1}x_{i,t}\right).
\end{equation*}
Using the definition of the Quantile function and the symmetric property of the normal distribution, with probability at least $1-\frac{\delta}{nT}$,
\begin{equation*}
    \hat{\beta}_{i,t}\sim\mathcal{N}\left(\beta_{i},\sigma^2(X_{i,t}^T,X_{i,t})^{-1}\right).
\end{equation*}

Exploring the second case where $i\in P_1$, inequality~\ref{eq:w_inequality} can be replaced with 
\begin{equation*}
    \left| \hat{\beta}_{i,t} \cdot x_{i,t} - C_i \cdot x_{i,t}\right|\leq w_{i,t}
\end{equation*}
where $C_i=\beta_i+\psi_{P_1}$. Again, by the standard properties of OLS estimators $\hat{\beta}{i,t}\sim\mathcal{N}\left(C_{i},\sigma^2(X_{i,t}^T,X_{i,t})^{-1}\right)$, we have for any fixed $x_{i,t}$:
\begin{equation*}
    \hat{\beta}_{i,t}\cdot x_{i,t}\sim\mathcal{N}\left(C_{i}\cdot x_{i,t},x_{i,t}^T\sigma^2(X_{i,t}^T,X_{i,t})^{-1}x_{i,t}\right).
\end{equation*}
This uses the definition of the Quantile function and the symmetric property of the normal distribution, with probability at least $1-\frac{\delta}{nT}$.

Therefore, the probability that inequality~\ref{eq:w_inequality} fails to hold for any $i$ at any timestep $t$ is at most $nT\cdot\frac{\delta}{nT}=\delta$.
\end{proof}

\begin{lemma}\label{lemma:b}
The following holds for any group $P_j$, any arm $i$, at any time $t$, with probability at least $1-\delta$:
\begin{equation}\label{eq:psi_inequality}
\left| \hat{\psi}_{P_j,t} \cdot x_{i,t} - \bar{\psi}_{P_j} \cdot x_{i,t}\right|\leq b_{P_j,i,t}.
\end{equation}
\end{lemma}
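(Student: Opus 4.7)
The plan is to mirror the proof of Lemma~\ref{lemma:w} almost verbatim, substituting the pooled group OLS estimator $\hat{\psi}_{P_j,t}$ for the per-arm estimator $\hat{\beta}_{i,t}$. The three ingredients I will need are (i) the sampling distribution of the OLS estimator, (ii) its image under the linear functional $v \mapsto v \cdot x_{i,t}$, and (iii) a union bound whose per-event failure probability matches the denominator appearing in the definition of $b_{P_j,i,t}$.

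First I would invoke the standard distribution theory of OLS (see~\citet{Kuan04:Classical}): writing the pooled regression as $\mathcal{Y}_{P_j,t} = \mathcal{X}_{P_j,t}\bar{\psi}_{P_j} + \varepsilon$, where $\bar{\psi}_{P_j}$ denotes the population (true average) group coefficient and $\varepsilon$ is the aggregate mean-zero Gaussian residual that absorbs both the $\mathcal{N}(0,1)$ reward noise and the per-arm deviations from the group mean (justified by the assumption that the average true reward is equal across groups), one obtains
\begin{equation*}
\hat{\psi}_{P_j,t} \sim \mathcal{N}\!\left(\bar{\psi}_{P_j},\ \sigma^2 (\mathcal{X}_{P_j,t}^T \mathcal{X}_{P_j,t})^{-1}\right).
\end{equation*}
Second, for any fixed context $x_{i,t}$ the inner product is an affine functional of a Gaussian vector, hence
\begin{equation*}
\hat{\psi}_{P_j,t}\cdot x_{i,t} \sim \mathcal{N}\!\left(\bar{\psi}_{P_j}\cdot x_{i,t},\ \sigma^2 x_{i,t}(\mathcal{X}_{P_j,t}^T \mathcal{X}_{P_j,t})^{-1} x_{i,t}^T\right),
\end{equation*}
which is exactly $\bar{\psi}_{P_j}\cdot x_{i,t}$ plus a sample from $\mathcal{F}_{P_j,i,t}$ as defined in Algorithm~\ref{alg:fair_top_interval}. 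By the definition of the quantile function and the symmetry of a centered Gaussian, the event $|\hat{\psi}_{P_j,t}\cdot x_{i,t} - \bar{\psi}_{P_j}\cdot x_{i,t}| > b_{P_j,i,t} = Q_{\mathcal{F}_{P_j,i,t}}(\delta / (2 \tfrac{n}{|P_j|} T))$ has probability at most $\delta |P_j|/(nT)$ for each fixed $(i,t)$.

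Finally I would close the argument with a union bound tailored to how $b_{P_j,i,t}$ is consulted in the algorithm. The bound $b_{P_j,i,t}$ is only invoked for arms $i$ whose upper-bound computation references group $P_j$, and summing the per-event failure probability $\delta|P_j|/(nT)$ over the at most $(n/|P_j|)\cdot T$ such pairs (an accounting chosen precisely to cancel the $|P_j|/n$ factor) recovers a total failure probability of at most $\delta$ per group, and $\delta$ overall once we rescale $\delta$ appropriately across the two groups.

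The main obstacle I anticipate is purely conceptual rather than calculational: one must pin down what $\bar{\psi}_{P_j}$ means when the within-group data is a mixture of arm-specific linear models $\beta_i\cdot x + \psi_{P_j}\cdot x$ for varying $i\in P_j$, and then verify that the residual of the pooled regression is genuinely centered and Gaussian enough for the classical OLS sampling distribution to apply. Once this identification is nailed down---most cleanly by defining $\bar{\psi}_{P_j}$ as the population least-squares projection of the arm-averaged reward onto the context and appealing to the cross-arm symmetry assumption stated just after Equation~\ref{eq:new_regret}---the remainder of the proof is the routine Quantile-and-union-bound manipulation copied from Lemma~\ref{lemma:w}.
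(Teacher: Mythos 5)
Your proposal follows essentially the same route as the paper's proof: the standard OLS sampling distribution for $\hat{\psi}_{P_j,t}$, projection onto $x_{i,t}$ to get a univariate Gaussian, the quantile-plus-symmetry step giving per-event failure probability $\delta|P_j|/(nT)$, and a union bound over $\tfrac{n}{|P_j|}T$ events to recover $\delta$. Your additional care in pinning down what $\bar{\psi}_{P_j}$ denotes and why the pooled residual is centered is a point the paper silently elides, but it does not change the argument.
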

\begin{proof}
By the standard properties of OLS estimators $\hat{\psi}_{P_j,t}\sim\mathcal{N}\left(\bar{\psi}_{P_j},\sigma^2(\mathcal{X}_{P_j,t}^T,\mathcal{X}_{P_j,t})^{-1}\right)$. For any fixed $x_{i,t}$,
\begin{equation*}
    \hat{\psi}_{i,t}\cdot x_{i,t}\sim\mathcal{N}\left(\bar{\psi}_{P_j}\cdot x_{i,t},x_{i,t}^T\sigma^2(\mathcal{X}_{P_j,t}^T,\mathcal{X}_{P_j,t})^{-1}x_{i,t}\right).
\end{equation*}
Using the definition of the quantile function and the symmetric property of the normal distribution, with probability at least $1-\frac{\delta}{\frac{n}{|P_j|}T}$, inequality~\ref{eq:psi_inequality} holds. Therefore, the probability that this fails to hold for any $i$ at any timestep $t$ is at most $\frac{n}{|P_j|}T\cdot \frac{\delta}{\frac{n}{|P_j|}T}=\delta$.
\end{proof}

With Lemma~\ref{lemma:w} and Lemma~\ref{lemma:b}, we can now prove Theorem~\ref{thm:two_group_bound}.
\begin{proof}
Regret for \textsc{GroupFairTopInterval} can be grouped into three terms for any $T_1\leq T$:
\begin{align}
    R^*(T) &= \sum_{t:\ t\textit{ is an explore round}} \mathit{regret}(t) \nonumber \\
    & + \sum_{t:\ t\textit{ is an exploit round and } t<T_1}\mathit{regret}(t) \nonumber \\
    & + \sum_{t:\ t\textit{ is an exploit round and }t\geq T_1}\mathit{regret}(t) \label{eq:split_regret}
\end{align}

Starting with the first term, define $p_t=\frac{1}{t^{1/3}}$ to be the probability that timestep $t$ is an exploration round. Then, for any $t$,
\begin{equation}\label{eq:p_t}
    \sum_{t'<t}p_{t'}=\Theta(t^{2/3}).
\end{equation}

We now focus on the third term of Equation~\ref{eq:split_regret}, where $t$ is an exploit round and $t>T_1$. Throughout the rest of the proof we assume Lemma~\ref{lemma:w} and Lemma~\ref{lemma:b}. Fix a exploit timestep $t$ where arm $i^t$ is played. Then,
\begin{align}
    \mathit{regret}(t) &\leq 2w_{i^t,t} + 2b_{P_1,i^t,t} + 2b_{P_2,i^t,t} \nonumber \\
    &\leq 2\max_i\left(w_{i,t} + b_{P_1,i,t} + b_{P_2,i,t}\right) \nonumber \\
    &\leq 2\left(\max_iw_{i,t} + \max_ib_{P_1,i,t} + \max_ib_{P_2,i,t}\right). \label{eq:2errror_bounds}
\end{align}

Note that:
\begin{equation*}
w_{i,t} = Q_{\mathcal{N}\left(0,x_{i,t}\left(X_{i,t}^TX_{i,t}\right)^{-1}x_{i,t}^T\right)}\left(\frac{\delta}{2nT}\right).
\end{equation*} 
Similarly, 
\begin{equation*}
b_{P_j,i,t} =Q_{\mathcal{N}\left(0,x_{i,t}\left(\mathcal{X}_{P_j,t}^T\mathcal{X}_{P_j,t}\right)^{-1}x_{i,t}^T\right)}\left(\frac{\delta}{2\frac{n}{\left|P_j\right|}T}\right).
\end{equation*}

We first bound
\begin{align}
    x_{i,t}\left(X_{i,t}^TX_{i,t}\right)^{-1}x_{i,t} &\leq ||x_{i,t}||\lambda_{\max} \left(\left(X_{i,t}^TX_{i,t}\right)^{-1}\right) \nonumber \\
    &=||x_{i,t}||\frac{1}{\lambda_{\min}\left(X_{i,t}^TX_{i,t}\right)} \nonumber \\
    &\leq\frac{1}{\lambda_{\min}\left(X_{i,t}^TX_{i,t}\right)}
\end{align}
where the last inequality holds since $||x_{i,t}||\leq 1$ for all $i$ and $t$. Using similar logic,
\begin{equation}
    x_{i,t}\left(\mathcal{X}_{P_j,t}^T\mathcal{X}_{P_j,t}\right)^{-1}x_{i,t}\leq \frac{1}{\lambda_{\min}\left(\mathcal{X}_{P_j,t}^T\mathcal{X}_{P_j,t}\right)}.
\end{equation}

Let $G_{i,t}$ be the number of observations of arm $i$ with contexts drawn uniformly from the distribution for arm $i$ prior to timestep $t$. Similarly, let $\mathcal{G}_{P_j,t}$ be the number of observations of group $P_j$ with contexts drawn uniformly from the distribution for group $P_j$ prior to timestep $t$. Let $L>\max_t\lambda_{\max}(x_{i,t}^T,x_{i,t})$. For any $\alpha\in[0,1]$, using the superaddivity of minimum eigenvectors for positive semidefinite matrices, we get
\begin{equation}\label{eq:G_i}
    \mathbb{E}\left[\lambda_{\min}(X_{i,t}^TX_{i,t})\right] \geq \frac{G_{i,t}}{d}\lambda_{\min_{i,d}}\geq \left\lfloor \frac{G_{i,t}}{d} \right\rfloor\lambda_{\min_{i,d}}.
\end{equation}
Similarly,
\begin{equation}\label{eq:G_P_j}
    \mathbb{E}\left[\lambda_{\min}(\mathcal{X}_{P_j,t}^T\mathcal{X}_{P_j,t})\right] \geq \frac{\mathcal{G}_{P_j,t}}{d}\lambda_{\min_{P_j,d}}\geq \left\lfloor \frac{\mathcal{G}_{P_j,t}}{d} \right\rfloor\lambda_{\min_{P_j,d}}.
\end{equation}
Equation~\ref{eq:G_i} implies that
\begin{align}
    &\ \Pr_{X_{i,t}}\left[\lambda_{\min}(X_{i,t}^T,X_{i,t}) \leq \alpha \left\lfloor\frac{G_{i,t}}{d}\right\rfloor\lambda_{\min_{i,d}}\right] \nonumber \\ 
    &\leq \Pr_{X_{i,t}}\left[\lambda_{\min}(X_{i,t}^T,X_{i,t}) \leq \alpha \mathbb{E}[\lambda_{\min}(X_{i,t}^TX_{i,t})]\right] \label{eq:lambda_min_1} \\
    &\leq \Pr_{X_{i,t}}\left[\lambda_{\min}(X_{i,t}^T,X_{i,t}) \leq \alpha \lambda_{\min}(\mathbb{E}[X_{i,t}^TX_{i,t}])\right] \label{eq:lambda_min_2} \\
    &\leq d\exp\left(\frac{-(1-\alpha)^2\lambda_{\min}(\mathbb{E}[X_{i,t}^TX_{i,t}])}{2L}\right) \label{eq:lambda_min_3} \\
    &\leq d\exp\left(\frac{-(1-\alpha)^2\mathbb{E}[\lambda_{\min}(X_{i,t}^TX_{i,t})]}{2L}\right) \label{eq:lambda_min_4} \\
    &\leq d\exp\left(\frac{-(1-\alpha)^2\left\lfloor\frac{G_{i,t}}{d}\right\rfloor\lambda_{\min_{i,d}}}{2L}\right) \label{eq:lambda_min_final} 
\end{align}
where Inequalities~\ref{eq:lambda_min_1} and \ref{eq:lambda_min_final} are from equation~\ref{eq:G_i}, Inequalities~\ref{eq:lambda_min_2} and \ref{eq:lambda_min_4} are from Jensen's inequality \cite{mitzenmacher2017probability}, and Inequality~\ref{eq:lambda_min_3} uses a Matrix Chernoff Bound \cite{mitzenmacher2017probability}. 

Using Inequality~\ref{eq:lambda_min_final} after rearranging with probability $1-\delta$:
\begin{equation}\label{eq:lambda_i}
    \lambda_{\min}(X_{i,t}^TX_{i,t}) \geq \alpha\left\lfloor\frac{G_{i,t}}{d}\right\rfloor \lambda_{\min_{i,d}}
\end{equation}
when
\begin{equation}
    G_{i,t}\geq d\left(\frac{L}{(1-\alpha)^2\lambda_{\min_{i,d}}}\right)\left(\ln\frac{1}{\delta}+\ln d\right).
\end{equation}
Using similar logic with probability $1-\delta$, we have
\begin{equation}\label{eq:lambda_P_j}
    \lambda_{\min}(\mathcal{X}_{P_j,t}^T\mathcal{X}_{P_j,t})\geq\alpha\left\lfloor\frac{\mathcal{G}_{P_j,t}}{d}\right\rfloor\lambda_{\min_{P_j,d}}
\end{equation}
when
\begin{equation}
    \mathcal{G}_{P_j,t}\geq d\left(\frac{L}{(1-\alpha)^2\lambda_{\min_{P_j,d}}}\right)\left(\ln\frac{1}{\delta}+\ln d\right).
\end{equation}

Using a multiplicative Chernoff bound \cite{mitzenmacher2017probability} for a fixed timestep $t$ with probability $1-\delta'$, the number of exploitation rounds prior to rounds $t$ will satisfy
\begin{equation}\label{eq:G_exploit}
    \left|G_t-\sum_{t'<t}p_{t'}\right| \leq \sqrt{\ln\frac{2}{\delta'}\sum_{t<t'}p_{t'}}
\end{equation}
For a fixed $i$ and timestep $t$ using a multiplicative Chernoff bound, with probability $1-\delta'$, the number of exploitation rounds for arm $i$ prior to round $t$ will satisfy
\begin{equation}\label{eq:G_i_exploit}
    \left|G_{i,t}-\frac{G_t}{n}\right| \leq \sqrt{\ln\frac{2}{\delta'}\frac{G_t}{n}}.
\end{equation}
Similarly, for a fixed group $P_j$ and timestep $t$ with probaility $1-\delta'$, the number of exploration rounds for group $P_j$ prior to round $t$ will satisfy
\begin{equation}\label{eq:G_P_j_exploit}
    \left|\mathcal{G}_{i,t}-\frac{G_t}{|P_j|/n}\right| \leq \sqrt{\ln\frac{2}{\delta'}\frac{G_t}{n/|P_j|}}
\end{equation}
where $|P_j|$ is the size of group $P_j$.

Combining equations~\ref{eq:G_exploit} and \ref{eq:G_i_exploit} with probability at least $1-2\delta'$ for a fixed arm $i$ and timestep $t$, if $\sum_{t'<t}P_{t'}\geq 36n\ln^2\frac{2}{\delta'}$ we have
\begin{equation}\label{eq:G_i_exploit_final}
    \left|G_{i,t}-\frac{\sum_{t'<t}p_{t'}}{n}\right| \leq \frac{\sum_{t'<t}p_{t'}}{2n}.
\end{equation}
Similarly, combining equations~\ref{eq:G_exploit} and \ref{eq:G_P_j_exploit} with probability at least $1-2\delta'$ for a fixed group $P_j$ and timestep $t$:
\begin{equation}\label{eq:G_P_j_exploit_final}
    \left|\mathcal{G}_{i,t}-\frac{\sum_{t'<t}P_{t'}}{n/|P_j|}\right| \leq \frac{\sum_{t'<t}p_{t'}}{2n}.
\end{equation}

Therefore, equation~\ref{eq:lambda_i} holds with probability $1-\delta'$ when
\begin{equation}
    \frac{\sum_{t'<t}p_t}{2n} \geq d\left(\frac{L}{(1-\alpha)^2\lambda_{\min_{i,d}}}\right)\left(\ln\frac{1}{\delta}+ \ln d\right).
\end{equation}
Similarly, equation~\ref{eq:lambda_P_j} holds with probability $1-\delta'$ when
\begin{equation}
    \frac{\sum_{t'<t}p_t}{2n/|P_j|} \geq d\left(\frac{L}{(1-\alpha)^2\lambda_{\min_{P_j,d}}}\right)\left(\ln\frac{1}{\delta}+ \ln d\right).
\end{equation}

Therefore, since $n/|P_j|<n$, the number of rounds after which we have sufficient samples such that the estimators are well-concentrated is
\begin{equation}\label{eq:T_1}
    T_1=\Theta\left(\min_{a}\left(\frac{dnL}{\lambda_{min_a,d}}\left(\ln^2\frac{2}{\delta} + \ln d\right)\right)^{3/2}\right)
\end{equation}
where $a\in[n]\cup P_1 \cup P_2$.

Also note that for any $t\geq T_1$ we have
\begin{equation}
    \sum_{t'<t} p_{t'}=\Omega\left(\min_a\left(\frac{dnL}{\lambda_{\min_{a,d}}}\left(\ln^2\frac{2}{\delta'} + \ln d\right)\right)\right).
\end{equation}

We can now bound the third term in Equation~\ref{eq:split_regret}.
\begin{align}
    &\sum_{t:\ t\textit{ is an exploit round and }t\geq T_1}regret(t) \nonumber \\
    &\leq 2\sum_{t\geq T_1}\left(\max_i w_{i,t} + \max_i b_{P_1,i,t} + \max_i b_{P_2,i,t}\right) \label{eq:big1} \\
    &\leq 2\sum_{t\geq T_1}\left(\max_i Q_{\mathcal{N}\left(0,\lambda_{\max}((X_{i,t}^TX_{i,t}))^{-1}\right)}\left(\frac{\delta}{2nT}\right) \right. \nonumber \\
    &\ \ \ \ + \max_i Q_{\mathcal{N}\left(0,\lambda_{\max}((\mathcal{X}_{P_1,t}^T\mathcal{X}_{P_1,t}))^{-1}\right)}\left(\frac{\delta}{2\frac{n}{|P_1|}T}\right) \nonumber \\ 
    &\ \ \ \ \left. + \max_i Q_{\mathcal{N}\left(0,\lambda_{\max}((\mathcal{X}_{P_2,t}^T\mathcal{X}_{P_2,t}))^{-1}\right)}\left(\frac{\delta}{2\frac{n}{|P_2|}T}\right)\right) \nonumber \\
    &\leq 2\sum_{t\geq T_1}\left( Q_{\mathcal{N}\left(0,\frac{1}{\min_i\lambda_{\min}((X_{i,t}^TX_{i,t}))^{-1}}\right)}\left(\frac{\delta}{2nT}\right) \right. \nonumber \\
    &\ \ \ \ + Q_{\mathcal{N}\left(0,\frac{1}{\min_i\lambda_{\min}((\mathcal{X}_{P_1,t}^T\mathcal{X}_{P_1,t}))^{-1}}\right)}\left(\frac{\delta}{2\frac{n}{|P_1|}T}\right) \nonumber \\ 
    &\ \ \ \ \left. +\  Q_{\mathcal{N}\left(0,\frac{1}{\min_i\lambda_{\min}((\mathcal{X}_{P_2,t}^T\mathcal{X}_{P_2,t}))^{-1}}\right)}\left(\frac{\delta}{2\frac{n}{|P_2|}T}\right)\right) \nonumber\\
    &\leq 2\sum_{t\geq T_1}\left( Q_{\mathcal{N}\left(0,\frac{1}{\min_i\alpha\left\lfloor \frac{G_{i,t}}{d}\right\rfloor\lambda_{\min_{i,d}}}\right)}\left(\frac{\delta}{2nT}\right) \right. \nonumber \\
    &\ \ \ \ + Q_{\mathcal{N}\left(0,\frac{1}{\alpha\left\lfloor\frac{\mathcal{G}_{P_1,t}}{d}\right\rfloor\lambda_{\min_{P_1,d}}}\right)}\left(\frac{\delta}{2\frac{n}{|P_1|}T}\right) \nonumber \\ 
    &\ \ \ \ \left. +\  Q_{\mathcal{N}\left(0,\frac{1}{\alpha\left\lfloor\frac{\mathcal{G}_{P_2,t}}{d}\right\rfloor\lambda_{\min_{P_2,d}}}\right)}\left(\frac{\delta}{2\frac{n}{|P_2|}T}\right)\right) + 3\delta'T \label{eq:big2} \\
    &\leq 2\sum_{t\geq T_1}\left( \sqrt{\frac{\ln\frac{2nT}{\delta}}{\min_i\alpha\left\lfloor\frac{G_{i,t}}{d}\right\rfloor\lambda_{\min_{i,d}}}} \right. \nonumber \\
    &\ \ \ \ + \sqrt{\frac{\ln\frac{2\frac{n}{|P_1|}T}{\delta}}{\min_i\alpha\left\lfloor\frac{\mathcal{G}_{P_1,t}}{d}\right\rfloor\lambda_{\min_{P_1,d}}}} \nonumber \\
    &\ \ \ \ \left. + \sqrt{\frac{\ln\frac{2\frac{n}{|P_2|}T}{\delta}}{\min_i\alpha\left\lfloor\frac{\mathcal{G}_{P_2,t}}{d}\right\rfloor\lambda_{\min_{P_2,d}}}} \right) + 6\delta'T \label{eq:big3} \\
    &\leq 2\sum_{t\geq T_1}\left(3\sqrt{\frac{\ln\frac{2nT}{\delta}}{\min_i\alpha\left\lfloor\frac{G_{i,t}}{d}\right\rfloor\lambda_{\min_{i,d}}}}\right) \label{eq:big4} \\
    &= O\left(\sum_{t\geq T_1}\sqrt{d\frac{\ln\frac{2nT}{\delta}}{\min_iG_{i,t}\lambda_{\min_{i,d}}}} + \delta'T\right) \nonumber \\
    &= O\left(\sqrt{d\frac{\ln\frac{2nT}{\delta}}{\min_i\lambda_{\min_{i,d}}}}\sum_{t\geq T_1}\sqrt{\frac{1}{\min_iG_{i,t}}} + \delta'T\right) \nonumber \\
    &= O\left(\sqrt{d\frac{\ln\frac{2nT}{\delta}}{\min_i\lambda_{\min_{i,d}}}}\sum_{t\geq T_1}\sqrt{\frac{n}{\sum_{t'<t}p_{t'}}} + \delta'T\right) \nonumber \\
    &= O\left(\sqrt{d\frac{\ln\frac{2nT}{\delta}}{\min_i\lambda_{\min_{i,d}}}}\sum_{t\geq T_1}\sqrt{\frac{n}{t^{2/3}}} + \delta'T\right) \label{eq:big5} \\
    &= O\left(\sqrt{dn\frac{\ln\frac{2nT}{\delta}}{\min_i\lambda_{\min_{i,d}}}}\sum_{t\in [T_1,T]}\frac{1}{t^{1/3}} + \delta'T\right) \nonumber \\
    &= O\left(\sqrt{dn\frac{\ln\frac{2nT}{\delta}}{\min_i\lambda_{\min_{i,d}}}}T^{2/3} + \delta'T\right) \label{eq:big_final}
\end{align}
where (\ref{eq:big1}) is due to Equation~\ref{eq:2errror_bounds}, (\ref{eq:big2}) is due to Equations~\ref{eq:G_i} and \ref{eq:G_P_j}, (\ref{eq:big3}) is due to Chernoff bounds, (\ref{eq:big4}) is due to the fact that $\frac{n}{|P_j|}<n$ and $G_{P_j,t}>\min_i G_{i,t}$, and (\ref{eq:big5}) is due to Equation~\ref{eq:p_t}.
Theorem~\ref{thm:two_group_bound} follows by combining Equations~\ref{eq:split_regret}, \ref{eq:p_t}, \ref{eq:T_1}, and \ref{eq:big_final} and setting $\delta'=\min\left(\frac{1}{3nT},\frac{1}{T^{1/3}}\right)$.

\end{proof}

\subsection{Multiple Groups}\label{app:multiple_groups}

\begin{algorithm}
\caption{\textsc{GroupFairTopInterval (Multiple Groups)}}\label{alg:fair_top_interval_mult_group}
\begin{algorithmic}[1]
\REQUIRE $\delta$, ($P_1,\ldots,P_m$), $\rho$
\FOR{$t=1\ldots T$}
\STATE with probability $\frac{1}{t^{1/3}}$, play $i_t\in_R\{1,\ldots , n\}$
\STATE \textbf{Else}
\begin{ALC@g}
\FOR{$j=1\ldots,m$}
\STATE Let $\hat{\psi}_{P_j,t} = \left(\mathcal{X}_{P_j,t}^T\mathcal{X}_{P_j,t}\right)^{-1}\mathcal{X}_{P_j,t}^T\mathcal{Y}_{P_j,t}$
\ENDFOR
\FOR{$i=1\ldots n$}
\STATE Let $\hat{\beta}_{i,t}=\left(X_{i,t}^TX_{i,t}\right)^{-1}X_{i,t}^TY_{i,t}^T$ \label{line:beta_multi}
\STATE Let $F_{i,t}=\mathcal{N}\left(0,\sigma^2x_{i,t}\left(X_{i,t}^TX_{i,t}\right)^{-1}x_{i,t}^T\right)$
\STATE Let $w_{i,t}=Q_{F_{i,t}}\left(\frac{\delta}{2nt}\right)$ \label{line:w_mult}
\FOR{$j$ where $i\in P_j$}
\STATE Let $\mathcal{F}_{P_j,i,t}=\mathcal{N}\left(0,\sigma^2x_{i,t}\left(\mathcal{X}_{P_j,t}^T\mathcal{X}_{P_j,t}\right)x_{i,t}^T\right)$
\STATE Let $b_{P_j,i,t}=Q_{\mathcal{F}_{P_j,i,t}}\left(\frac{\delta}{2\frac{n}{|P_j|}T}\right)$\label{line:b_j}
\STATE Let $\hat{u}_{i,t}=\hat{\beta}_{i,t}\cdot x_{i,t} + w_{i,t} + \rho - \hat{\psi}_{P_j,t} \cdot x_{i,t} + b_{P_j,i,t}$\label{line:upper_j}
\ENDFOR
\ENDFOR
\STATE Play $\argmax_i\hat{u}_{i,t}$ and observe reward $y_{i,t}$\label{line:pull_mult}
\end{ALC@g}
\ENDFOR
\end{algorithmic}
\end{algorithm}

In in order to prove Theorem~\ref{thm:mult_group_bound}, we first prove two lemmas.
\begin{lemma}\label{lemma:w_mult}
The following holds for any $i$ at any time $t$, with probability at least $1-\delta$
\begin{align}
    &\left|\hat{\beta}_{i,t}\cdot x_{i,t} - \left(\beta_i\cdot x_{i,t} + \sum_{j=1}^m \mathbbm{1}
    \left[i\in P_j\right]\psi_{P_j}\cdot x_{i,t}\right)\right| \nonumber \\
    &\leq w_{i,t} \label{eq:lem_w_mult}
\end{align}
\end{lemma}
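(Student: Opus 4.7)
The plan is to mirror the structure of Lemma~\ref{lemma:w}, but exploit the fact that $P_1,\dots,P_m$ partition the arms. Because the groups are disjoint and cover $A$, for each arm $i$ there is a unique index $j(i)$ with $i\in P_{j(i)}$, so the indicator sum in (\ref{eq:lem_w_mult}) collapses to a single term:
\begin{equation*}
\sum_{j=1}^m \mathbbm{1}[i\in P_j]\,\psi_{P_j}\cdot x_{i,t} \;=\; \psi_{P_{j(i)}}\cdot x_{i,t}.
\end{equation*}
Thus it suffices to prove, for each fixed $i$, that $|\hat{\beta}_{i,t}\cdot x_{i,t}-C_i\cdot x_{i,t}|\leq w_{i,t}$ with the appropriate probability, where $C_i \defeq \beta_i + \psi_{P_{j(i)}}$.

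The first step is to recognize that the reward process for arm $i$ takes the form $r_{i,t}=C_i\cdot x_{i,t}+\mathcal{N}(0,1)$, so by the standard OLS properties invoked in Lemma~\ref{lemma:w} we have
\begin{equation*}
\hat{\beta}_{i,t}\sim\mathcal{N}\!\left(C_i,\ \sigma^2(X_{i,t}^T X_{i,t})^{-1}\right),
\end{equation*}
and therefore, for any fixed $x_{i,t}$,
\begin{equation*}
\hat{\beta}_{i,t}\cdot x_{i,t}\sim\mathcal{N}\!\left(C_i\cdot x_{i,t},\ x_{i,t}^T\sigma^2(X_{i,t}^T X_{i,t})^{-1}x_{i,t}\right).
\end{equation*}
By the definition of $w_{i,t}$ as the quantile $Q_{F_{i,t}}(\delta/(2nt))$ (line~\ref{line:w} of Algorithm~\ref{alg:fair_top_interval_mult_group}) together with the symmetry of the Gaussian, this yields $|\hat{\beta}_{i,t}\cdot x_{i,t}-C_i\cdot x_{i,t}|\leq w_{i,t}$ with probability at least $1-\delta/(nT)$ for this fixed pair $(i,t)$.

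Second, a union bound over the $n$ arms and $T$ timesteps bounds the total failure probability by $nT\cdot\delta/(nT)=\delta$, giving the claim uniformly over all $i$ and $t$. The argument is essentially identical to the case $i\in P_1$ in Lemma~\ref{lemma:w}: the non-sensitive case treated separately there no longer arises, because in the multi-group setting every group contributes its own $\psi_{P_j}$ rather than being identically zero. There is no real obstacle here; the only subtlety is carefully invoking the partition property to reduce the indicator sum to a single bias vector, after which the proof reduces cleanly to a pointwise Gaussian tail bound plus a union bound. This lemma will then play the same role in proving Theorem~\ref{thm:group_bound} that Lemma~\ref{lemma:w} played for Theorem~\ref{thm:two_group_bound}, combined with a group-side analog of Lemma~\ref{lemma:b} applied to each of the $m$ estimators $\hat{\psi}_{P_j,t}$.
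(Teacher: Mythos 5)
Your proposal is correct and follows essentially the same route as the paper's own proof: both reduce the indicator sum to a single bias vector via the partition, set $C_i=\beta_i+\psi_{P_j}$ for the unique $P_j$ containing $i$, invoke the standard OLS distributional property and the Gaussian quantile/symmetry argument to get a per-$(i,t)$ failure probability of $\delta/(nT)$, and finish with the same union bound $nT\cdot\delta/(nT)=\delta$. The only difference is expository (you make the reward model $r_{i,t}=C_i\cdot x_{i,t}+\mathcal{N}(0,1)$ explicit before citing the OLS property), which does not change the argument.
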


\begin{proof}
Inequality~\ref{eq:lem_w_mult} can be replaced with
\begin{equation*}
    \left|\hat{\beta}_{i,t}\cdot x_{i,t} - C_i\cdot x_{i,t}\right| \leq w_{i,t}
\end{equation*}
where $C_i=\beta_i+\psi_{P_j}$ and $i\in P_j$. By the standard properties of OLS estimators $\hat{\beta}_{i,t}\sim N\left(C_i, \sigma^2(X_{i,t}^TX_{i,t})^{-1}\right)$. For any fixed $x_{i,t}$:
\begin{equation*}
    \hat{\beta}_{i,t}\cdot x_{i,t}\sim N\left(C_i\cdot x_{i,t}, x_{i,t}^T\sigma^2(X_{i,t}^TX_{i,t})^{-1}x_{i,t}\right)
\end{equation*}
Using the definition of the quantile function and the symmetric property of the normal distribution, with probability at least $1-\frac{\delta}{nT}$, Inequality~\ref{eq:lem_w_mult} holds. Therefore, the probability that inequality~\ref{eq:lem_w_mult} fails to hold for any $i$ at any timestep $t$ is at most $nT\frac{\delta}{nT}=\delta$.
\end{proof}

\begin{lemma}\label{lemma:b_mult}
The following holds for any group $P_j$, any arm $i$, at any timestep $t$, with probability at least $1-\delta$:
\begin{equation}
    \left|\hat{\psi}_{P_j,t}\cdot x_{i,t} - \psi_{P_j,t}\cdot x_{i,t}\right| \leq b_{P_j,i,t}.\label{eq:lem_b_mult}
\end{equation}
\end{lemma}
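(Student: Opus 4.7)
The plan is to mirror the two-group proof of Lemma \ref{lemma:b} essentially verbatim. The key observation is that each $\hat{\psi}_{P_j,t}$ is an OLS estimator fit on data collected from arms in $P_j$ alone, so the distributional analysis for a single group is unchanged when the total number of groups grows from two to $m$; we simply need to carry the indexing by $j$ through each step.

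First I would invoke the standard distributional property of the OLS estimator \cite{Kuan04:Classical}, which gives
\begin{equation*}
    \hat{\psi}_{P_j,t} \sim \mathcal{N}\!\left(\psi_{P_j},\, \sigma^2(\mathcal{X}_{P_j,t}^T \mathcal{X}_{P_j,t})^{-1}\right).
\end{equation*}
Projecting onto the fixed context $x_{i,t}$ yields the univariate distribution
\begin{equation*}
    \hat{\psi}_{P_j,t}\cdot x_{i,t} \sim \mathcal{N}\!\left(\psi_{P_j}\cdot x_{i,t},\, x_{i,t}^T \sigma^2(\mathcal{X}_{P_j,t}^T \mathcal{X}_{P_j,t})^{-1} x_{i,t}\right),
\end{equation*}
whose variance is precisely that of $\mathcal{F}_{P_j,i,t}$ as defined in Algorithm \ref{alg:fair_top_interval_mult_group}. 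Using the symmetry of the normal distribution and the definition of the quantile function evaluated at level $\delta/(2(n/|P_j|)T)$, the two-sided deviation bound (\ref{eq:lem_b_mult}) fails at a fixed triple $(P_j, i, t)$ with probability at most $\delta/((n/|P_j|)T)$ — exactly the per-event failure rate appearing in Lemma \ref{lemma:b}.

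Finally, I would close via a union bound across the relevant arm--timestep pairs for each group, noting that each arm $i$ belongs to exactly one group $P_j$, so the events indexed by $(P_j, i, t)$ can be collapsed to a sum of $(n/|P_j|)T$-many events per group, identical in form to the two-group case. The main obstacle is essentially bookkeeping rather than mathematics: no new probabilistic machinery is required because the estimators $\{\hat{\psi}_{P_j,t}\}_{j=1}^m$ are fit on disjoint subsets of pulls and may therefore be analyzed one group at a time, with the quantile levels in Algorithm \ref{alg:fair_top_interval_mult_group} calibrated so that the aggregate failure probability remains at most $\delta$.
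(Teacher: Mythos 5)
Your proposal matches the paper's proof essentially step for step: the same OLS distributional property, the same projection onto the fixed context $x_{i,t}$, the same use of the quantile function and normal symmetry to get a per-event failure probability of $\delta/\bigl(\tfrac{n}{|P_j|}T\bigr)$, and the same union bound over $\tfrac{n}{|P_j|}T$ events to conclude. No substantive differences to report.
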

\begin{proof}
By the standard properties of OLS estimators,
\begin{equation*}
    \hat{\psi}_{P_j,t}\sim N\left(\psi_{P_j}, \sigma^2(\mathcal{X}_{P_j, t}^T\mathcal{X}_{P_j,t})^{-1}\right).
\end{equation*}
For any fixed $x_{i,t}$:
\begin{equation*}
    \hat{\psi}_{P_j,t}\cdot x_{i,t} \sim N\left(\psi_{P_j}\cdot x_{i,t}, x_{i,t}^T\sigma^2(\mathcal{X}_{P_j,t}^T\mathcal{X}_{P_j,t})^{-1}x_{i,t}\right).
\end{equation*}
Using the definition of the quantile function and the symmetric property of the normal distribution, with probability of at least $1-\frac{\delta}{\frac{n}{|P_j|}T}$ inequality~\ref{eq:lem_b_mult} holds. Therefore the probability this fails to hold for any $i$ at timestep $t$ is at most $\frac{n}{|P_j|}T\frac{\delta}{\frac{n}{|P_j|}T}=\delta$.
\end{proof}

\begin{theorem}\label{thm:mult_group_bound}
For $m$ groups $P_1,\ldots,P_m$, where $\rho$ is the expected average reward, \textsc{GroupFairTopInterval (Multiple Groups)} has regret
\begin{align}
    R^*(T) &= O\left( \sqrt{\frac{dn\ln\frac{2nT}{\delta}}{l}}T^{2/3}  \right. \nonumber \\
    & \left. + \left( \frac{dnmL}{l}\left(\ln^2\frac{2nT}{\delta}+\ln d\right)\right)^{2/3} \right)
\end{align} 
where $l=\min_i\lambda_{min_{i,d}}$ and $L>\max_t\lambda_{\max}(x_{i,t}^Tx_{i,t})$.
\end{theorem}
We can now prove Theorem~\ref{thm:mult_group_bound}.
\begin{proof}
Assume that both Lemma~\ref{lemma:w_mult} and Lemma~\ref{lemma:b_mult} hold for all arms $i$ and all timesteps $t$.

Regret for \textsc{GroupFairTopInterval (Multiple Groups)} can be grouped into three terms for any $T_1\leq T$:
\begin{align}
    R^*(T) &= \sum_{t:\ t\text{ is an explore round}} \mathit{regret}(t) \nonumber \\
    & + \sum_{t:\ t\text{ is an exploit round and } t<T_1} \mathit{regret}(t) \nonumber \\
    & + \sum_{t:\ t\text{ is an exploit round and } t\geq T_1} \mathit{regret}(t) \label{eq:mult_regret}
\end{align}

Starting with the first term in Equation~\ref{eq:mult_regret}, define $p_t=\frac{1}{t^{1/3}}$ to be the probability that timestep $t$ is an exploration round. Then, for any $t$,
\begin{equation}
    \sum_{t'<t}p_{t'}=\Theta(t^{2/3})\label{eq:mult_p}
\end{equation}

Focusing on the third term of Equation~\ref{eq:mult_regret}, fix an exploit timestep $t$ where arm $i_t$ is played. Then,
\begin{align}
    \mathit{regret}(t)&\leq 2w_{i_t,t} + \max_j(2b_{P_j,i_t,t}) \nonumber \\
    &\leq 2\max_{i,j}(w_{i,t}+b_{P_j,i,t}) \nonumber \\
    &\leq 2\left(\max_iw_{i,t} + \max_{i,j}b_{P_j,i,t}\right)\label{eq:mult_1}
\end{align}

From Algorithm~\ref{alg:fair_top_interval_mult_group}, note that
\begin{equation*}
    w_{i,t}=Q_{N\left(0,x_{i,t}\left(X_{i,t}^TX_{i,t}\right)^{-1}x_{i,t}^T\right)}\left(\frac{\delta}{2nT}\right).
\end{equation*}
Similarly,
\begin{equation*}
    b_{P_j,i,t}=Q_{N\left(0,x_{i,t}\left(\mathcal{X}_{P_j,t}^T\mathcal{X}_{P_j,t}\right)^{-1}x_{i,t}^T\right)}\left(\frac{\delta}{2\frac{n}{|P_j|}T}\right)
\end{equation*}

We will first bound $x_{i,t}\left(X_{i,t}^TX_{i,t}\right)^{-1}x_{i,t}^T$.
\begin{align}
    x_{i,t}\left(X_{i,t}^TX_{i,t}\right)^{-1}x_{i,t}^T &\leq ||x_{i,t}||\lambda_{\max}\left(\left(X_{i,t}^TX_{i,t}\right)^{-1}\right) \nonumber \\
    &=||x_{i,t}||\frac{1}{\lambda_{\min}(X_{i,t}^TX_{i,t})} \nonumber \\
    &\leq \frac{1}{\lambda_{\min}(X_{i,t}^TX_{i,t})}\label{eq:mult_2}
\end{align}
where inequality~\ref{eq:mult_2} is due to $||x_{i,t}||\leq 1$ for all arms $i$ and all timesteps $t$.

Using similar logic:
\begin{equation}
    x_{i,t}\left(\mathcal{X}_{P_j,t}^T\mathcal{X}_{P_j,t}\right)^{-1}x_{i,t}^T \leq \frac{1}{\lambda_{\min}\left(\mathcal{X}_{P_j,t}^T\mathcal{X}_{P_j,t}\right)}.\label{eq:mult_3}
\end{equation}

Let $G_{i,t}$ be the number of observations of arm $i$ with context $i$ drawn uniformly from the distribution for arm $i$ prior to timestep $t$. Similarly, let $\mathcal{G}_{P_j,t}$ be the number of observations of group $P_j$ with context drawn uniformly from the distribution for group $P_j$ prior to timestep $t$. Let $L> \max_t\lambda_{\max}\left(x_{i,t}^Tx_{i,t}\right)$.

For any $\alpha\in[0,1]$, using the superadditivity of minimum eugenvectors for positive semi-definite matrices, we get:
\begin{align}
    \mathbb{E}\left[\lambda_{\min}(X_{i,t}^TX_{i,t})\right]&\geq\frac{G_{i,t}}{d}\lambda_{\min_{i,d}} \nonumber \\
    &\geq \left\lfloor\frac{G_{i,t}}{d}\right\rfloor.\label{eq:mult_4}
\end{align}
Similarly,
\begin{align}
    \mathbb{E}\left[\lambda_{min}(\mathcal{X}_{P_j,t}^T\mathcal{X}_{P_j,t})\right]&\geq \left\lfloor\frac{G_{P_j,t}}{d}\right\rfloor\lambda_{\min_{P_j.d}}.\label{eq:mult_5}
\end{align}

Equation~\ref{eq:mult_4} implies that:
\begin{align}
    &\Pr_{x_{i,t}}\left[\lambda_{\min}(X_{i,t}X_{i,t})\leq \alpha\left\lfloor\frac{G_{i,t}}{d}\right\rfloor\lambda_{\min_{i,d}}\right] \nonumber \\
    &\leq \Pr_{x_{i,t}}\left[\lambda_{\min}(X_{i,t}^TX_{i,t})\leq \alpha\mathbb{E}\left[\lambda_{\min}(X_{i,t}^TX_{i,t})\right]\right] \label{eq:mult_6.1} \\
    &\leq \Pr_{x_{i,t}}\left[\lambda_{\min}(X_{i,t}^TX_{i,t})\leq \alpha\lambda_{\min}\left(\mathbb{E}\left[X_{i,t}^TX_{i,t}\right]\right)\right]\label{eq:mult_6.2} \\
    &\leq d\exp\left(\frac{-(1-\alpha)^2\lambda_{\min}\left(\mathbb{E}\left[X_{i,t}^TX_{i,t}\right]\right)}{2L}\right) \label{eq:mult_6.3} \\
    &\leq d\exp\left(\frac{-(1-\alpha)^2\mathbb{E}\left[\lambda_{\min}\left(X_{i,t}^TX_{i,t}\right)\right]}{2L}\right) \label{eq:mult_6.4} \\
    &\leq d\exp\left(\frac{-(1-\alpha)^2\left\lfloor\frac{G_{i,t}}{d}\right\rfloor\lambda_{\min_{i,d}}}{2L}\right) \label{eq:mult_6.final}
\end{align}
where inequality~\ref{eq:mult_6.1} comes from inequality~\ref{eq:mult_4}, inequality~\ref{eq:mult_6.2} is due to Jensen's inequality, inequality~\ref{eq:mult_6.3} is due to a matrix Chernoff Bound, inequality~\ref{eq:mult_6.4} is due to Jensen's inequality, and inequality~\ref{eq:mult_6.final} is due to inequality~\ref{eq:mult_4}. After rearranging inequality~\ref{eq:mult_6.final}, with probability $1-\delta$, 
\begin{equation}
    \lambda_{min}(X_{i,t}^TX_{i,t}) \geq \alpha \left\lfloor \frac{G_{i,t}}{d} \right\rfloor \lambda_{\min_{i,d}} \label{eq:mult_7}
\end{equation}
when
\begin{equation}
    G_{i,t}\geq d\left(\frac{L}{(1-\alpha)^2\lambda_{\min_{i,d}}}\right)\left(\ln{\frac{1}{\delta}} + \ln{d}\right). \label{eq:multi_8}
\end{equation}

Using similar logic with probability $1-\delta$, we have
\begin{equation}
    \lambda_{\min}\left(\mathcal{X}_{P_j,t}^T\mathcal{X}_{P_j,t}\right) \geq \alpha\left\lfloor\frac{G_{P_j,t}}{d}\right\rfloor\lambda_{\min_{P_j,d}} \label{eq:mult_9}
\end{equation}
when
\begin{equation}
    \mathcal{G}_{P_j,t} \geq d\left(\frac{L}{(1-\alpha)^2\lambda_{\min_{P_j,d}}}\right)\left(\ln\frac{1}{\delta}+\ln d\right).\label{eq:mult_10}
\end{equation}

Using a multiplicative Chernoff bound for a fixed timestep $t$ with probability $1-\delta'$, the number of exploitation rounds prior to rount $t$ will satisfy
\begin{equation}
    \left|G_t-\sum_{t'<t}p_{t'}\right| \leq \sqrt{\ln\frac{2}{\delta'}\sum_{t'<t}p_{t'}}.\label{eq:mult_11}
\end{equation}
For a fixed $i$ and timestep $t$, using a multiplicative Chernoff bound for a fixed timestep $t$ with probability $1-\delta'$, the number of exploitation rounds for arm $i$ prior to round $t$ will satisfy
\begin{equation}
    \left|G_{i,t}-\frac{G_t}{n}\right| \leq \sqrt{\ln\frac{2}{\delta'}\frac{G_t}{n}}\label{eq:mult_12}
\end{equation}

Similarly, for a fixed group $P_j$ and timestep $t$ with probability $1-\delta'$, the number of exploration rounds for group $P_j$ prior to round $t$ will satisfy
\begin{equation}
    \left|G_{P_j,t}-\frac{G_t}{n/|P_j|}\right| \leq \sqrt{\ln\frac{2}{\delta'}\frac{G_t}{n/|P_j|}}\label{eq:mult_13}
\end{equation}
where $|P_j|$ is the size of group $P_j$.

Combining inequality~\ref{eq:mult_11} and inequality~\ref{eq:mult_12}, with probability $1-2\delta'$ for a fixed arm $i$ and timestep $t$, if $\sum_{t'<t}p_{t'} \geq 36n\ln^2\frac{2}{\delta'}$ we have
\begin{equation}
    \left|G_{i,t}-\frac{\sum_{t'<t}p_{t'}}{n}\right|\leq \frac{\sum_{t'<t}p_{t'}}{2n}.\label{eq:mult_14}
\end{equation}

Similarly, combining inequality~\ref{eq:mult_11} and inequality~\ref{eq:mult_13} with probability at least $1-2\delta'$ for a fixed group $P_j$ and fixed timestep $t$:
\begin{equation}
    \left|G_{i,t}-\frac{\sum_{t'<t}p_{t'}}{n/|P_j|}\right|\leq \frac{\sum_{t'<t}p_{t'}}{2n/|P_j|}.\label{eq:mult_15}
\end{equation}

Therefore inequality~\ref{eq:mult_7} holds with probability $1-\delta'$ when
\begin{equation}
    \frac{\sum_{t'<t}p_{t'}}{2n}\geq d\left(\frac{L}{(1-\alpha)^2\lambda_{\min_{i,d}}}\right)\left(\ln\frac{1}{\delta} + \ln d\right). \label{eq:mult_16}
\end{equation}

Similarly, inequality~\ref{eq:mult_9} holds with probability $1-\delta'$ when
\begin{equation}
    \frac{\sum_{t'<t}p_{t'}}{2n/|P_j|} \geq d \left(\frac{L}{(1-\alpha)^2\lambda_{\min_{i,d}}}\right)\left(\ln\frac{1}{\delta}+\ln d\right). \label{eq:mult_17}
\end{equation}

Therefore, since $\frac{n}{|P_j|}<n$, the number of rounds after which we have sufficient samples such that the estimators are well-concentrated is
\begin{equation}
    T_1=\Theta\left(\min_a\left(\frac{dnmL}{\lambda_{\min_{a,d}}}\left(\ln^2\frac{2}{\delta}+\ln d\right)\right)^{3/2}\right) \label{eq:mult_T1}
\end{equation}
where $a\in[n]\cup P_1\cup\cdots\cup P_m$.

Also note that for any $t>T_1$ we have:
\begin{equation}
    \sum_{t'<t}p_{t'} = \Omega\left(\min_a\left(\frac{dnmL}{2\min_{a,d}}\left(\ln^2\frac{2}{\delta'} + \ln d\right)\right)\right). \label{eq:mult_18}
\end{equation}

Now we can bound the third term in equation~\ref{eq:mult_regret}.
\begin{align}
    &\sum_{t:\ t\text{ is an exploit round and } t>T_1} \mathit{regret}(t)\nonumber \\
    &\leq 2\sum_{t>T_1} \left(\max_iw_{i,t} + \max_{i,j}b_{P_j,i,t}\right) \label{eq:multi_19.1} \\
    &\leq 2\sum_{t>T_1} \left(\max_i Q_{N\left(0,\lambda_{\max}(X_{i,t}^TX_{i,t})^{-1}\right)}\left(\frac{\delta}{2nT}\right) \right. \nonumber \\
    &\ \left. + \max_j Q_{N\left(0,\lambda_{\max}(\mathcal{X}_{P_j,t}^T\mathcal{X}_{P_j,t})^{-1}\right)}\left(\frac{\delta}{2\frac{n}{|P_j|}T}\right)\right) \nonumber \\
    &\leq 2\sum_{t>T_1} \left( Q_{N\left(0,\frac{1}{\min_i\lambda_{\min}(X_{i,t}^TX_{i,t})}\right)}\left(\frac{\delta}{2nT}\right) \right. \nonumber \\
    &\ \left. +\  Q_{N\left(0,\frac{1}{\min_j\lambda_{\min}(\mathcal{X}_{P_j,t}^T\mathcal{X}_{P_j,t})}\right)}\left(\frac{\delta}{2\frac{n}{|P_j|}T}\right)\right) \nonumber \\
    &\leq 2\sum_{t>T_1} \left( Q_{N\left(0,\frac{1}{\min_i\alpha\left\lfloor\frac{G_{i,t}}{d}\right\rfloor\lambda_{\min_{i,d}}}\right)}\left(\frac{\delta}{2nT}\right) \right. \nonumber \\
    &\ \left. +\  Q_{N\left(0,\frac{1}{\min_j\alpha\left\lfloor\frac{\mathcal{G}_{P_j,t}}{d}\right\rfloor\lambda_{\min_{P_j,d}}}\right)}\left(\frac{\delta}{2\frac{n}{|P_j|}T}\right)\right) + 3\delta'T\label{eq:multi_19.2} \\
    &\leq 2\sum_{t>T_1} \left(\sqrt{\frac{\ln\frac{2nT}{\delta}}{\min_i\alpha\left\lfloor\frac{G_{i,t}}{d}\right\rfloor\lambda_{\min_{i,d}}}}\right. \nonumber \\
    &\ \left. + \sqrt{\frac{\ln\frac{2\frac{n}{min_j|P_j|}T}{\delta}}{\min_i\alpha\left\lfloor\frac{\mathcal{G}_{P_j,t}}{d}\right\rfloor\lambda_{\min_{i,d}}}}\right) + 6\delta'T \label{eq:multi_19.3} \\
    &\leq 2\sum_{t>T_1} \left(2\sqrt{\frac{\ln\frac{2nT}{\delta}}{\min_i\alpha\left\lfloor\frac{G_{i,t}}{d}\right\rfloor\lambda_{\min_{i,d}}}}\right) +6\delta'T \label{eq:multi_19.4} \\
    &=O\left(\sum_{t>T_1}\sqrt{d\frac{\ln\frac{2nT}{\delta}}{\min_iG_{i,t}\lambda_{\min_{i,d}}}} + \delta'T\right) \nonumber \\
    &=O\left(\sqrt{d\frac{\ln\frac{2nT}{\delta}}{\min_i\lambda_{\min_{i,d}}}}\sum_{t>T_1}\sqrt{\frac{1}{\min_iG_{i,t}}} + \delta'T\right) \nonumber \\
    &=O\left(\sqrt{d\frac{\ln\frac{2nT}{\delta}}{\min_i\lambda_{\min_{i,d}}}}\sum_{t>T_1}\sqrt{\frac{n}{\sum_{t'<t}p_{t'}}} + \delta'T\right) \nonumber \\
    &=O\left(\sqrt{d\frac{\ln\frac{2nT}{\delta}}{\min_i\lambda_{\min_{i,d}}}}\sum_{t>T_1}\sqrt{\frac{n}{t^{2/3}}} + \delta'T\right) \label{eq:multi_19.5} \\
    &=O\left(\sqrt{dn\frac{\ln\frac{2nT}{\delta}}{\min_i\lambda_{\min_{i,d}}}}\sum_{t\in[T_1,T]}\frac{1}{t^{1/3}} + \delta'T\right) \nonumber \\
    &=O\left(\sqrt{dn\frac{\ln\frac{2nT}{\delta}}{\min_i\lambda_{\min_{i,d}}}}T^{2/3} + \delta'T\right) \label{eq:multi_19.final}
\end{align}
where inequality~\ref{eq:multi_19.1} is due to equation~\ref{eq:mult_1}, inequality~\ref{eq:multi_19.2} is due to equation~\ref{eq:mult_4} and equation~\ref{eq:mult_5}, inequality~\ref{eq:multi_19.3} is due to a Chernoff bound, inequality~\ref{eq:multi_19.4} is due to the fact that $\frac{n}{\min_j|P_j|}<n$ and $\min_j\mathcal{G}_{P_j,t}\geq min_i G_{i,t}$, and equation~\ref{eq:multi_19.5} is due to equation~\ref{eq:mult_p}.

Combining equation~\ref{eq:mult_regret}, equation~\ref{eq:mult_p}, equation~\ref{eq:mult_18}, and equation~\ref{eq:multi_19.final} and setting $\delta'=\min(\frac{1}{3nT},\frac{1}{T^{1/3}})$ we get Theorem~\ref{thm:mult_group_bound}.
\end{proof}

\section{Additional Experiments}\label{app:experiments}
Additionally to the experiments found in Section~\ref{sec:synth}, we ran the following experiments and found no interesting effects:
\begin{enumerate}[(a)]
    \item Varying the range in which coefficients are chosen (between [0,c]) while setting the total budget $T=1000$, the number of arms $n=10$, the error mean $\mu=10$, the number of sensitive arms equal to 5, and the context dimension $d=2$ (Figures~\ref{fig:pulls_c} and \ref{fig:regret_c}).
    \item Varying the context dimension while setting the total budget $T=1000$, the number of arms $n=10$, the error mean $\mu=10$, and the number of sensitive arms equal to 5 (Figures~\ref{fig:pulls_context} and \ref{fig:regret_context_size}).
    \item Varying probability $\delta$ while setting the total budget $T=1000$, the number of arms $n=10$, the error mean $\mu=10$, the number of sensitive arms equal to 5, and the context dimension $d=2$ (Figures~\ref{fig:pulls_delta} and \ref{fig:regret_delta}).
\end{enumerate}

\begin{figure*}
\centering
\begin{subfigure}[t]{0.30\textwidth}
   \includegraphics[width=1\linewidth]{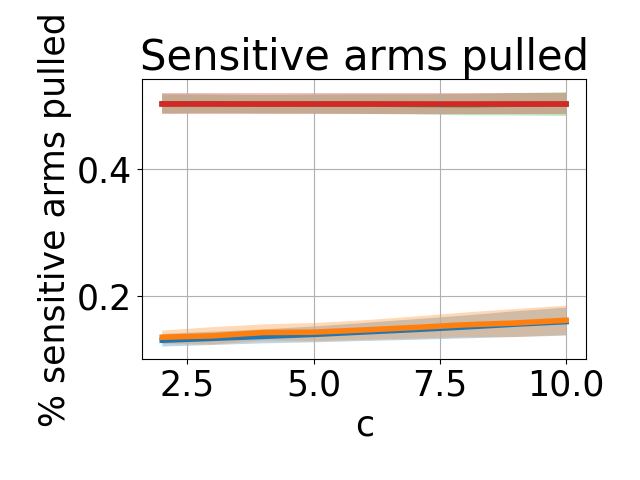}
   \caption{$n=10$, $\mu=10$, number of sensitive arms = 5}
   \label{fig:pulls_c} 
\end{subfigure}
\quad
\begin{subfigure}[t]{0.30\textwidth}
   \includegraphics[width=1\linewidth]{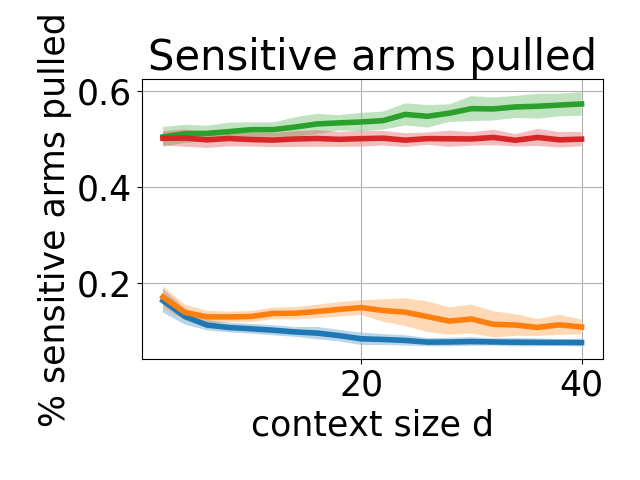}
   \caption{$T=1000$, $\mu=10$, number of sensitive arms = 5}
   \label{fig:pulls_context}
\end{subfigure}

\begin{subfigure}[t]{0.30\textwidth}
   \includegraphics[width=1\linewidth]{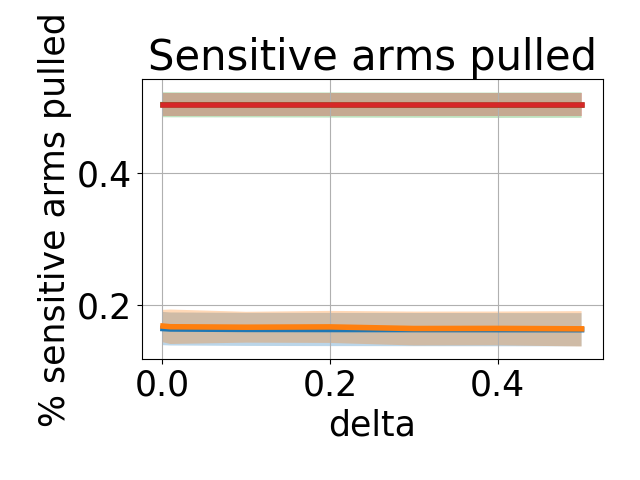}
   \caption{$n=10$, $T=1000$, number of sensitive arms = 5}
   \label{fig:pulls_delta}
\end{subfigure}
\quad
\begin{subfigure}[b]{0.30\textwidth}
   \includegraphics[width=1\linewidth]{graphs/pulls_legend.png}
   \caption{legend}
\end{subfigure}

\caption{Percentage of total arm pulls that were pulled using sensitive arms.}
\end{figure*}

\begin{figure*}
\vspace{-10pt}
\centering
\begin{subfigure}[t]{0.30\textwidth}
   \includegraphics[width=1\linewidth]{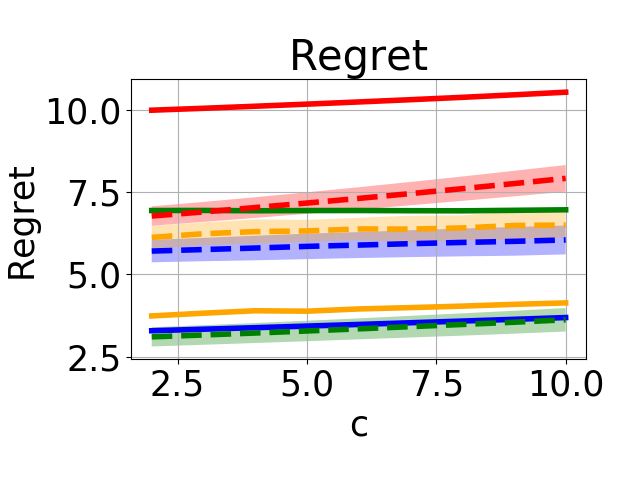}
   \caption{$n=10$, $\mu=10$, number of sensitive arms = 5}
   \label{fig:regret_c} 
\end{subfigure}
\quad
\begin{subfigure}[t]{0.30\textwidth}
   \includegraphics[width=1\linewidth]{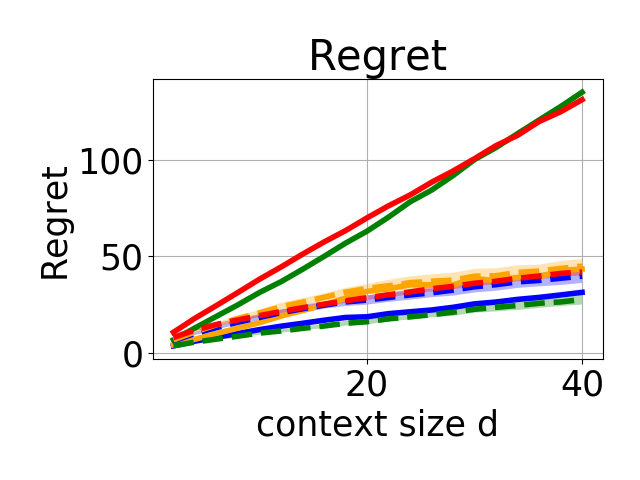}
   \caption{$T=1000$, $\mu=10$, number of sensitive arms = 5}
   \label{fig:regret_context_size}
\end{subfigure}

\begin{subfigure}[t]{0.30\textwidth}
   \includegraphics[width=1\linewidth]{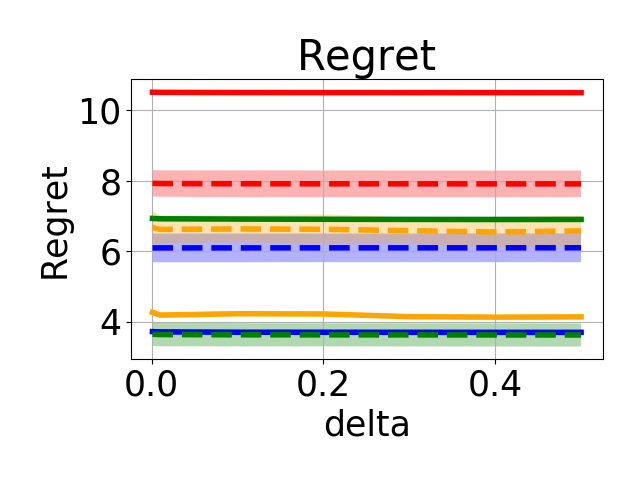}
   \caption{$n=10$, $T=1000$, number of sensitive arms = 5}
   \label{fig:regret_delta}
\end{subfigure}
\quad
\begin{subfigure}[t]{0.30\textwidth}
   \includegraphics[width=1\linewidth]{graphs/regret_legend.png}
   \caption{legend}
\end{subfigure}

\caption{Regret for synthetic experiments. The solid lines are regret given the rewards received from pulling the arms (including the group bias). The dashed lines is the true regret (without the group bias).}
\end{figure*}

\end{document}